\def\BibTeX{{\rm B\kern-.05em{\sc i\kern-.025em b}\kern-.08em
    T\kern-.1667em\lower.7ex\hbox{E}\kern-.125emX}}
\newcommand{\hide}[1]{}
\newcommand{\bit}{\begin{compactitem}}
\newcommand{\eit}{\end{compactitem}}
\newcommand{\ben}{\begin{compactenum}}
\newcommand{\een}{\end{compactenum}}
\newcommand{\method}{\textsc{DynWatch}\xspace}
\newcommand{\methodls}{\textsc{DynWatch-Local}\xspace}
\newcommand{\codeurl}{https://github.com/bhooi/dynamic.git}
\newcommand{\G}{\mathcal{G}}
\renewcommand{\S}{\mathcal{S}}
\newcommand{\V}{\mathcal{V}}
\newcommand{\E}{\mathcal{E}}
\newcommand{\N}{\mathcal{N}}
\newcommand{\datasmal}{\textsc{case2383}\xspace}
\newcommand{\dataxlarge}{\textsc{ACTIVSg25k}\xspace}
\theoremstyle{definition}
\newtheorem{definition}{Definition}[section]
\DeclareMathOperator{\Exp}{\mathbb{E}}
\newtheorem{assumption}{Assumption}[section]
\begin{document}

\title{Dynamic Graph-Based Anomaly Detection in the Electrical Grid
}

\author{Shimiao~Li,~\IEEEmembership{Graduate Student Member,~IEEE,}
	Amritanshu~Pandey,~\IEEEmembership{Member,~IEEE,}
	Bryan~Hooi,
	Christos~Faloutsos,
	and~Larry~Pileggi,~\IEEEmembership{Fellow,~IEEE}
\thanks{\copyright 2021 IEEE. Personal use of this material is permitted. Permission from IEEE must be obtained for all other uses, in any current or future media, including
reprinting/republishing this material for advertising or promotional purposes, creating new collective works, for resale or redistribution to servers or lists, or reuse of any copyrighted component of this work in other works.}
\thanks{S. L., A. P., and L. P. are with the Department of Electrical and Computer Engineering, Carnegie Mellon University, Pittsburgh, PA, 15213 USA (email:\{shimiaol, amritanp,pileggi\}@andrew.cmu.edu), B. H. is with the School of Computing and the Institute of Data Science in National University of Singapore(email:bhooi@comp.nus.edu.sg), C. F. is with the Department of Computer Science, Carnegie Mellon University, Pittsburgh, PA, 15213 USA (email:christos@cs.cmu.edu)}}

\maketitle

\begin{abstract}
	Given sensor readings over time from a power grid, how can we accurately detect when an anomaly occurs? 
A key part of achieving this goal is to use the network of power grid sensors to quickly detect, in real-time, when any unusual events, whether natural faults or malicious, occur on the power grid. 
Existing bad-data detectors in the industry lack the sophistication to robustly detect broad types of anomalies, especially those due to emerging cyber-attacks, since they operate on a single measurement snapshot of the grid at a time.
New ML methods are more widely applicable, but generally do not consider the impact of topology change on sensor measurements and thus cannot accommodate regular topology adjustments in historical data. 
Hence, we propose \method, a domain knowledge based and topology-aware algorithm for anomaly detection using sensors placed on a dynamic grid. 
Our approach is accurate, outperforming existing approaches by 20$\%$ or more (F-measure) in experiments; and fast, averaging less than 1.7ms per time tick per sensor on a 60K+ branch case using a laptop computer, and scaling linearly with the size of the graph.
\end{abstract}

\begin{IEEEkeywords}
	anomaly detection, dynamic grid, graph distance, LODF, power system modeling
\end{IEEEkeywords}

\section{Introduction}
\label{sec:Introduction}
Maintaining and improving the reliability of the electric power grid is a critically important goal. Estimates~\cite{amin2011us} suggest that reducing outages in the U.S. grid could save $\$ 49$ billion per year, reduce emissions by $12$ to $18\%$, while improving efficiency could save an additional $\$ 20.4$ billion per year. Although grid operators and engineers work tirelessly to maintain reliability of the electric grids, many challenges persist. Climate change is increasing the frequency of natural disasters, resulting in higher rate of equipment failure. Adding to the climate risk is a new adversary in the form of cyber-intrusions that is capable of disrupting grid control and communication. This is evident from the recent reports of foreign hackers successfully penetrating control rooms of the U.S. power plants~\cite{perlroth2017hackers} and of cyber-attacks on the Ukrainian grid in 2015-2016 \cite{ukrain2015}\cite{lee2017crashoverride} that brought down sections of the network causing damages worth billions of dollars.  

A key tool that the grid operators use to safeguard against these failures, whether naturally occurring or malicious, involves the anomaly detection capabilities that are implemented in the grid control rooms. The primary purpose of these techniques is to help grid operators isolate faulty data from the healthy ones to result in accurate situational awareness, which further allows grid operators to take rapid corrective actions. In almost real-time, these methods can analyze measurement values, dynamics and other informative features to detect abnormal events including erroneous topology or measurements, while accommodating normal grid behaviors, including regular topology changes and power configuration adjustments.

In existing power grids, anomaly detection is performed within the Energy Management Systems (EMS) \cite{EMS} that are installed in the control rooms. The EMS through Supervisory Control and Data Acquisition (SCADA) system collects two primary sources of data: i) online analog measurement data from various sensors such as remote terminal units (RTUs) and phasor measurement units (PMUs) and ii) status data of switching devices and circuit breakers on various devices such as lines and transformers. 
Both types of data are collected every few seconds: for instance analog measurements are updated every 10s in PJM\cite{PJM-manual-01}; and status data are updated every 4s in ISO-NE\cite{ISO-NE-OP14}.
Upon processing, separate analysis units within the EMS are run to identify anomalies in measurements and topology.

AC state-estimation (ACSE) \cite{WLS-SE} along with bad-data detection (BDD) algorithms \cite{traditional-BDI} is used today for anomaly detection on measurement data from RTUs and PMUs, via hypothesis test on output residuals. These are run every 1 to 10 minutes (every 5 minutes in ERCOT and US Midwest ISO (MISO), every 3 minutes in the U.K. grid and ISO-NE, and every 1-2 minutes in PJM\cite{PJM-manual-12}). The most widely used problem formulation for ACSE is the weighted-least-square (WLS) form\cite{traditional-BDI}, minimizing mean-squared measurement error. Some other formulations are designed to achieve intrinsic robustness against bad data, including least absolute value (LAV) based \cite{LAVSE-PMU-abur}, least median of squares based \cite{robustSE-median}, as well as iteratively reweighted least-squares based approaches \cite{robustSE-reweight}. Unfortunately when RTUs are included, these methods generally suffer from difficult convergence due to non-linear measurement models. There have been recent attempts at convexification of the state-estimation problem \cite{convexSE-LAV-Li}\cite{convexTESE-SDP-weng}\cite{SUGAR-SE-Alex}\cite{SUGAR-SE-Li}; however, several limitations persist. These include inability to detect coordinated attacks like false-data injection attack and high sensitivity to network topology errors.

On the other hand, events of topology change are detected by the network topology processor (NTP)\cite{TE-NTP-book}\cite{TE-NTP-tracking}, which transforms the input circuit breaker/switching status data into the bus-branch model in which network connectivity and meter locations are identified. However, existing operational NTP does not account for topology errors due to erroneous status data caused by communication, operator entry errors, cyber-attacks, etc. As a countermeasure, there exist research works that overcome some challenges of existing NTP. For instance, generalized topology processing \cite{TE-GSE} creates pseudo-measurements and applies hypothesis tests to detect topology anomalies. Another approach \cite{TE-WLSE-BDI} applies hypothesis tests on WLS residuals from SE to detect topological errors, as an extended application of ACSE BDD. More recently, other advanced methods such as \cite{TESE-GSE}\cite{TESE-GSE-PMUabur}\cite{convexTESE-SDP-weng} have been developed where TE and ACSE are merged together to perform estimation on a node-breaker model. This enables measurement error and topology error to be effectively identified and separated. However, challenges 
persist here as well, mostly due to the lack of efficient and scalable methods to handle the non-linearity with conventional measurements and the inability of these methods to detect topology anomalies during coordinated cyber-attacks.

These challenges with existing TE and ACSE can be addressed by using anomaly detection based on statistical behavior.
Instead of analysing the well-defined measurement models in a snapshot, one can leverage statistical behaviors to extract some patterns from historical data. This can be helpful since most anomalies, either unexpected faults or cyber-attacks, usually disrupt the statistical consistency of the data stream, despite being invisible from one single snapshot. Existing behavioral anomaly detection methods can be broadly categorized into model-based (where an expectation of observation is obtained by fitting a mathematical model)\cite{hamilton1994time}\cite{regression4AD}\cite{yi2017grouped}, representation based\cite{breunig2000lof}\cite{LOFvariants}, graphical methods\cite{liu2008isolation}, and others (see Section \ref{sec:relatedwork} for related works). 
Generally, these families of methods are all directly applicable to the problem described in this paper. However, these methods either do not consider the impact of graph change on the observations \cite{hamilton1994time}\cite{yi2017grouped}\cite{breunig2000lof}, or assume a static topology across the data stream\cite{hooi2018gridwatch}. This is problematic for grid problems since  topological changes can happen frequently on a real grid (see Figure \ref{fig: topology and total power change}(a) in Section \ref{sec: discussion}), and different configurations naturally cause differences in grid measurements. Hence, previous observations from disparate topologies and loadings may provide little value in assessing the anomalousness at a given time \textit{t}. Without a selection of relevant data, methods may produce more false positives by falsely creating alarms when regular topology changes occur.

In light of these challenges, a more applicable way is to make the anomaly detection method \textit{context-aware}, so that it indicates the expected patterns of behavior from data samples collected with relevant context of topology. With that viewpoint, we propose a method that processes the time series of measurements and the time series of their topology to be \textit{context-aware}. Intuitively, the method works by defining graph distances based on domain knowledge and estimating a reliable distribution of measurements at time $t$ from the most relevant previous data. Then, an alarm is created if the measured value deviates greatly from the center of its distribution. The goal of the method is to directly consider the impact of topology on observations, so that regular topological changes are accommodated, while detecting any false measurement and topological errors.
Furthermore, to account for large-scale grids, we develop a locally-sensitive variant, \methodls.  


Figure \ref{fig:FDIA example} demonstrates the novelty of the proposed method. We apply a False Data Injection Attack\cite{fdia-review} to a 14-bus network, a special type of cyber attack that modifies the measurement output of the selected sensors in the grid. We show that our proposed method is able to detect the anomalies without false positive (FP) outcomes, while the baseline methods are not. Details of this experiment are provided in Section \ref{sec:experiment on FDIA}.

\begin{figure}[h]
	\centering
	\includegraphics[width=\linewidth]{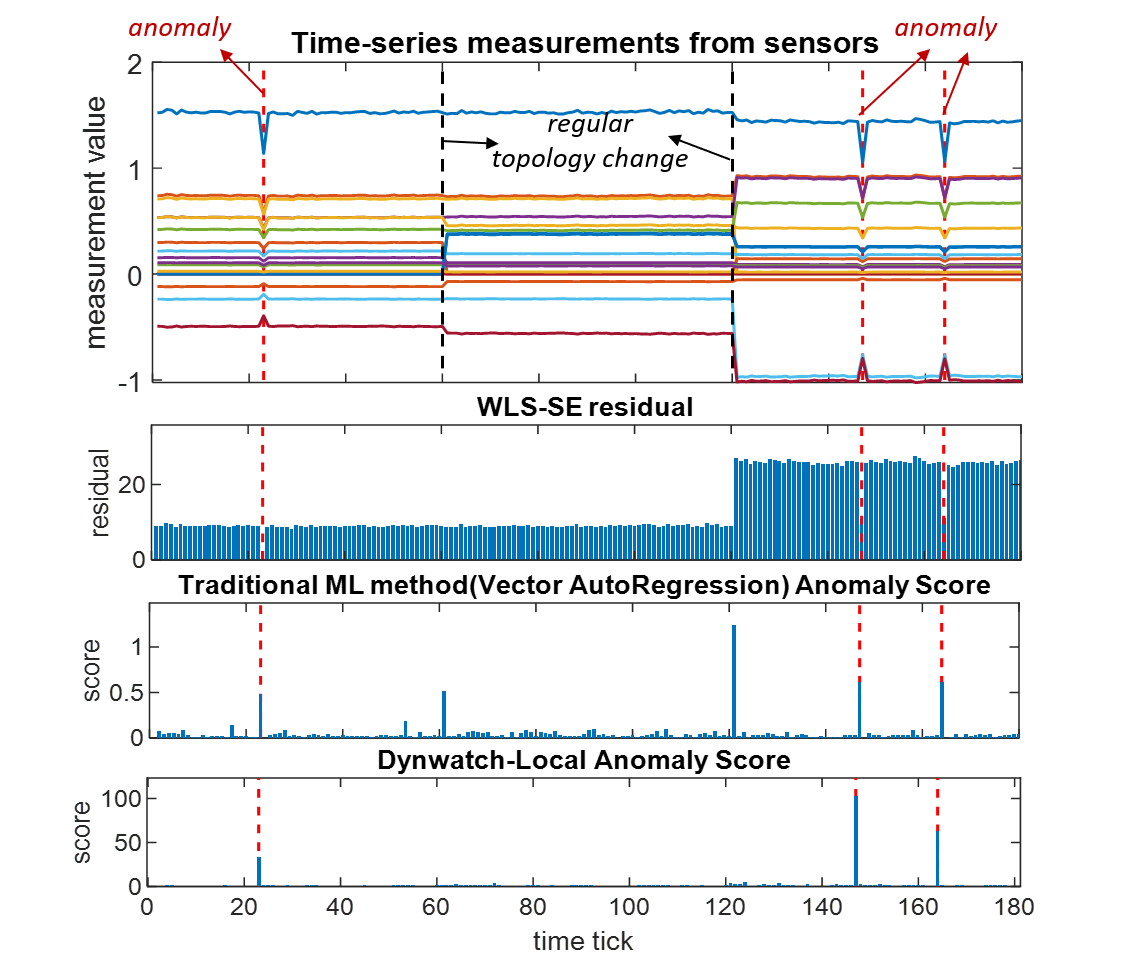}
	\caption[]{Example of FDIA: SE BDD fails to give high residuals; traditional ML method VAR misclassifies regular topology changes as anomalies; only Dynwatch is able to detect all anomalies without False Positives (FP).}
	\label{fig:FDIA example}
\end{figure}

The rest of the paper is organized as follows: Section \ref{sec:relatedwork} provides background of existing methods. Section \ref{sec:method} shows our proposed method. Section \ref{sec:Results} validates the method with experiment results. Finally, Section \ref{sec:Conclusion} concludes the paper.

\section{Background and Related Work} \label{sec:relatedwork}

\subsection{Existing ML-based Anomaly Detection Approaches}
\paragraph{Time Series Anomaly Detection} Numerous algorithms exist for anomaly detection in univariate time series~\cite{keogh2007finding}. For multivariate time series, LOF~\cite{breunig2000lof} uses a local density approach. Isolation Forests~\cite{liu2008isolation} partition the data using a set of trees for anomaly detection. Other approaches use neural networks~\cite{yi2017grouped}, distance-based~\cite{ramaswamy2000efficient}, and exemplars~\cite{jones2014anomaly}. However, none of these consider the graph structure. 

\paragraph{Anomaly Detection in Temporal Graphs} \cite{akoglu2010oddball} finds anomalous changes in graphs using an egonet (i.e. neighborhood) based approach, while \cite{chen2012community} uses a community-based approach. \cite{mongiovi2013netspot} finds connected regions with high anomalousness. \cite{araujo2014com2} detects large and/or transient communities using Minimum Description Length. \cite{akoglu2010event} finds change points in dynamic graphs, while other partition-based~\cite{aggarwal2011outlier} and sketch-based~\cite{ranshous2016scalable} also exist for anomaly detection. However, these methods focus on detecting unusual communities or connections, while our approach has a very different goal of detecting disturbances which cause changes in sensor values. 

\paragraph{Anomaly Detection with Domain Expert Knowledge} 
Domain-specific anomaly detectors based on optimal power flow\cite{anomaly-detection-OPF}, SE residual-based test (traditional SE BDI\cite{traditional-BDI}, and gross error detection\cite{gross_error_detection_1988}\cite{gross_error_detection_2015}) and TE\cite{TE-NTP-book}\cite{TE-GSE} already exist and are purely based on power system theories, yet they are typically limited to specific disturbances and attacks against the grid components. On the contrary, ML methods, as illustrated above, are more generally applicable; however, without a basic understanding of how the real-world grid operates, they are likely to perform poorly. Motivated by the pros and cons, many efforts \cite{hooi2018gridwatch}\cite{IDS-domain-knowledge} have combined the benefits of the two,  embedding the domain-knowledge in general ML methods. Such methods with domain knowledge have been shown to have higher performance (see \cite{hooi2018gridwatch}\cite{IDS-domain-knowledge}) but still do not fully consider the dynamic nature of the electric grid.


To summarize, 
the major contribution of \method when compared with existing methods are summarized in table \ref{tab:salesman}. 
{
\aboverulesep=0ex
\belowrulesep=0ex
\renewcommand{\arraystretch}{1.1}
\begin{table}[h!]
\small
\centering
\caption{Comparison of related approaches: only \method satisfies all the listed properties.}
\label{tab:salesman}
\begin{tabular}{@{}rccc|c@{}}
\toprule
 & 
 \rotatebox{90}{\textbf{Time Series}} & 
 \rotatebox{90}{\textbf{Graph-based}\ } & 
 \rotatebox{90}{\textbf{GridWatch}}  & 
 {\bf \rotatebox{90}{\method}} \\ \midrule
\textbf{Graph Data} &  & \Checkmark & \Checkmark  & \CheckmarkBold \\ 
\textbf{Anomalies in Sensor Data} & \Checkmark & & \Checkmark & \CheckmarkBold \\ 
\textbf{Changing Graph}  &  & \Checkmark &  & \CheckmarkBold \\
\textbf{Locally Sensitive}  &  & &  & \CheckmarkBold \\
\bottomrule
\end{tabular}
\end{table}
}

\subsection{Handling Redundant Data}Transmission grids, in order to be observable, have a large number of RTUs and PMUs installed. For the anomaly detection algorithm developed in this paper, processing the large volume of redundant data for anomaly detection is unnecessary and computationally prohibitive. This is because each sensor predominantly captures the relative information of its neighboring sensors as well. Therefore, to create a proper input for anomaly detection, pre-processing techniques can be deployed: Principal Component Analysis (PCA)\cite{PCA} creates a low-dimensional representation by extracting uncorrelated directions; projection pursuit\cite{projection-pursuit} reduces the input to a low-dimensional projected time series that optimizes the kurtosis coefficient; Independent Component Analysis (ICA)\cite{ICA} identifies a subset of independent variables. Alternative techniques like cross-correlation analysis\cite{cross-correlation-analysis} also help create a low-dimensional input. 
A detailed survey regarding dimensionality reduction can be found in\cite{review-dimension-reduction}. 
Rather than transforming the redundant input, other algorithms for sensor placement ~\cite{brueni2005pmu} are also applicable, by suggesting the best several locations of sensors to be installed and providing observability. \cite{hooi2018gridwatch} has shown the selection of a small number of sensor locations with a provably near-optimal probability of detecting an anomaly.

\subsection{Standard graph distance/similarity measures}\label{sec: graph distance survey}
Many graph distance/similarity measures have been proposed in the past that relate to anomaly detection in dynamic graphs. A survey of such measures can be found in \cite{graph_distance_survey2008} \cite{graph_distance_survey2015}. Most of these measures fit in one of the following categories:
\begin{enumerate}
    \item \textbf{Graph isomorphism and its generalizations:} examples include Maximum Common Subgraph (MCS) distance \cite{GED_MCS_shoubridge2002}, Graph Edit Distance (GED) \cite{GED_MCS_shoubridge2002}, and variants of GED\cite{GED_variants}, etc.
	\item \textbf{Aggregate statistical measure:} preferred for measure for larger graphs, examples include diameter distance 
	\cite{diameter_distance}, clustering based measures \cite{clustering_coeff}, and degree distribution\cite{degree_distribution}, etc.
	\item \textbf{Iterative methods based on the structural similarity of local neighborhoods:} This type of method exchanges node/edge similarities until convergence and computes the similarity between two graphs by coupling the similarity scores of nodes and edges \cite{graph_distance_survey2008}.
    \item \textbf{Other complex feature-based measures:} examples include graph kernel-based similarities \cite{graph_kernels}, modality distance\cite{modality_distance_bunke}, median graph distance\cite{modality_distance_bunke}, etc. 
\end{enumerate}

While all of the above graph distance measures have unique advantages, none of them are designed for grid-specific challenges, nor do they capture the implicit physics of the power grid graph. 
Section \ref{sec:measure} will provide a more detailed discussion about grid-specific challenges and develop a novel graph distance measure to meet the needs of the grid anomaly detection.

\subsection{Background: Line Outage Distribution Factor (LODF)}\label{sec:lodf}
Line Outage Distribution Factor (LODF) is a sensitivity measure of how much an outage on a line affects real power flow on other lines in the system~\cite{wood2013power}. This factor can be easily and efficiently calculated by assuming a DC power flow model with lossless lines or a linearized AC power flow model around the operating point and is commonly used to estimate the linear impact of line outage. For an outage on line $k$, LODF $d_{l}^{k}$ gives the ratio between power change $\Delta f_{l}$ on an observed line $l$ and the pre-outage real power $f_{k}$ on the outage line $k$.
\begin{align*}
d_{l}^{k}=\frac{\Delta f_{l}}{f_{k}}
\end{align*}

\section{Proposed \method Algorithm} \label{sec:method}

\subsection{Preliminaries} Table \ref{tab:dfn} shows the symbols used in this paper. 
\setlength{\tabcolsep}{6pt}
\begin{table}[htbp]
\small
\centering
	\caption{Symbols and definitions \label{tab:dfn}}
	\begin{tabular}{ @{}rl@{} }  
	\toprule
	\textbf{Symbol} & \textbf{Interpretation} \\ \midrule
		$\G=(\V,\E)$ & Input graph \\
		$\S$ & Subset of nodes to place sensors on\\
		$n$ & Number of nodes \\
		$s$ & Number of scenarios \\
		$\N_i$ & Set of edges adjacent to node $i$ \\
		$V_i(t)$ & Voltage at node $i$ at time $t$\\
		$I_e(t)$ & Current at edge $e$ at time $t$\\
		$s_{ie}(t)$ & Power w.r.t. node $i$ and edge $e$ at time $t$\\
		$\Delta s_{ie}(t)$ & Power change: $\Delta s_{ie}(t) =  s_{ie}(t) - s_{ie}(t-1)$\\
		$X_{i}(t)$ & Sensor vector for scenario $i$ at time $t$ \\
	\bottomrule
	\end{tabular} 
\end{table}

We are given a dynamic graph (grid) $\G_t=(\V(t),\E(t))$ at each time tick $t$, where $\V(t)$ denotes the set of nodes (active grid buses), and $\E(t))$ denotes the set of edges (active grid branches). Also we have a fixed set of sensors $\S \subseteq \V$. Each sensor installed on node $i$ can obtain PMU or RTU measurements
at each time $t$. 

For each sensor on node $i$, we obtain the power flows on all lines adjacent to node $i$, as observed in \cite{hooi2018gridwatch} that using power (rather than current) provides better anomaly detection in practice.
For any PMU bus $i$ and edge $e \in \N_i$, define the power w.r.t. $i$ along edge $e$ as $s_{ie}(t) = V_i(t) \cdot I_e(t)^*$, where $^*$ is the complex conjugate. 

{\bf Topology determination:} At any time $t$, the given dynamic graph $\G(t)$ can be considered a ‘reference topology’. The observed measurements should be consistent with the reference topology if there are no anomalous measurements and topology. In this work, we use the output from topology estimation (NTP) in the grid energy management system (EMS) as the reference topology. It is a valid reference because the operator believes that this is the best estimate of the grid topology at any time. Notably, this topology is not assumed to be perfect and accurate. Topological errors, along with false measurements, are to be detected by the proposed method.

\subsection{Motivation and Method Overview}

Consider the simple power grid shown in Figure \ref{fig:toyexample_nonlocal}, which evolves over time from $\G_1$ to $\G_2$ to $\G_3$. For simplicity, assume that we have a single sensor, from which we want to detect anomalous events. How do we evaluate whether the current time point ($t=15$) is an anomaly? If the graph had not been changing, we could simply combine all past sensor values to learn a distribution of normal behavior (e.g. fitting a Gaussian distribution as in $\mathcal{N}_{t=15}$), then evaluate the current time point using this Gaussian distribution (e.g. in terms of the number of standard deviations away from the mean). 
\begin{figure}[h]
	\centering
	\includegraphics[width=0.8\linewidth]{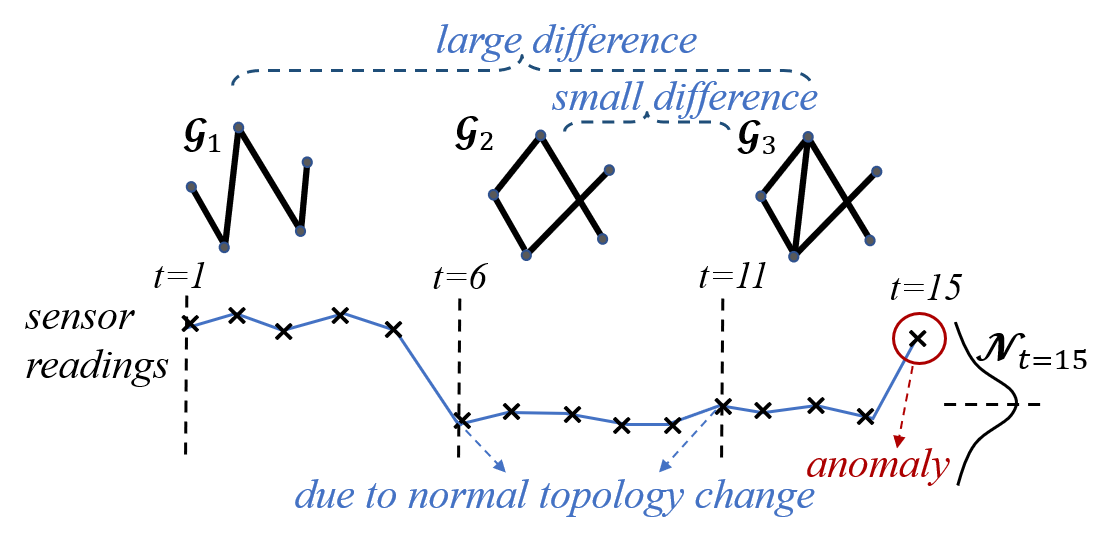}
	\caption[]{Simple motivating example: anomaly detection under changing graphs using \method.}
	\label{fig:toyexample_nonlocal}
\end{figure}

In the changing graph setting, we still want to learn a model of normal behavior ($\mathcal{N}_{t=15}$), but while taking the graph changes into account. Note that $\G_2$ and $\G_3$ are only slightly different, while $\G_1$ and $\G_3$ are very different. Hence, the sensor values coming from $\G_2$ (i.e. time $6$ to $10$) should be taken into account more highly when constructing $\mathcal{N}_{t=15}$, as compared to those from $\G_1$. Intuitively, the sensor values from $\G_1$ are drawn from a very different distribution from the current graph, and thus should not influence our learned model $\mathcal{N}_{t=15}$. In general, the more similar a graph is to the current graph, the more we should take its sensor values into account when learning our current model. This motivates the 3-step process we use:
\begin{enumerate}
\item {\bf Graph Distances:} Measure the distance between each past graph and the current graph.
\item {\bf Temporal Weighting:} Weight the past sensor data, where data from graphs that are similar to the current one are given higher weight.
\item {\bf Anomaly Detection:} Learn a distribution of normal behavior ($\mathcal{N}_{t}$) from the weighted sensor values, and measure the anomalousness at the current time based on its deviation from this distribution.
\end{enumerate}

To further clarify the motivation and methodology, we provide an informal definition for the anomaly detection problem and a statistical definition of the anomaly pattern. 

\begin{definition}[Dynamic electric grid anomaly detection problem]
Given time series data of sensor observations on a set of sensors $\{s\}$, and a time series of topologies, find (1) the timestamps that correspond to an anomaly pattern (2) the top-k sensor locations that contribute most to the anomaly pattern.
\end{definition}
\begin{definition}[Anomaly pattern]At any time $t$, given a time series of observations $X(t),X(t-1),…X(t-W)$, a time series of topologies, a graph distance measure $D(Gi,Gj)$, and a detection threshold $\tau$, we assign different (trust) weights to observations at $t-1,…, t-W$, based on $D(G_t,G_{(t-1)} ),…,D(G_t,G_{(t-W)})$: the larger the distance, the lower the weight. This provides a statistical distribution of $X(t)$, parameterized by weighted median $\mu(t)$ and weighted IQR(t). The instance $t$ is predicted as anomalous if $X(t)$ is an outlier of the distribution, i.e., $|X(t)-\mu(t)|>\tau\cdot IQR(t)$.
\end{definition}
In electric grids, the target anomalies correspond to topological errors (unexpected topology changes unknown to the operator) and measurement errors. These are the types of anomalies that our method is proposed for and is likely to detect based on empirical results provided at the end of the manuscript.

In the following sections, we first introduce our domain-aware graph distance measure based on Line Outage Distribution Factors (LODF)~\cite{wood2013power}. Then, we describe our temporal weighting and anomaly detection framework, which flexibly allows for any given graph distance measure. Finally, we present an alternate distance measure that is locally sensitive, i.e., it accounts for the local neighborhood around a given sensor.

\subsection{Proposed Graph Distance Measure} \label{sec:measure}

In this section, we describe our proposed graph distance measure to calculate the distance $D(\G_i, \G_j)$ between any pair of graphs. For ease of understanding, the rest of Section \ref{sec:method} uses the example of anomaly detection at $t=15$ in Figure \ref{fig:toyexample_nonlocal} as an extended case study, but our approach can be easily extended to the general case. 

Section \ref{sec: graph distance survey} has provided a short review of existing graph distance measures, but these measures do not apply directly to the grid-specific challenge in this paper. For an anomaly detection algorithm to work well on the power grid applications, the choice of graph distance needs to consider {\bf problem-specific challenges} along with desirable properties for an anomaly detection algorithm (scalability, sensitivity to change, and {\bf ‘importance-of-change awareness’}). One grid-specific challenge is that the ideal graph distance should capture ‘grid physics’ rather than only graph structural changes. Specifically, the distance should be sensitive to the redistribution of power flow, not only the addition/deletion of nodes/edges, since the anomaly information is extracted from power flow measurements. Meanwhile, as the ‘importance-of-change awareness’ indicates, grid changes that cause big shifts in power flow (measurements) should result in larger graph distances, than changes that cause minor power impact. Unfortunately, none of the classical graph distances can capture the physics of the power flow and quantify the impact of graph change in terms of power. To handle this, this work proposes a novel design of graph distance by making use of the power sensitivity factor.

Intuitively, the goal is for our graph distance to represent {\bf redistribution of line power flow}. Critical changes in topology result in large redistributions of power. Thus, the graph distance arising from a topology change should be large if the changed edges can potentially cause large amounts of power redistribution.

Hence, given two graphs $\G_i(\V(i),\E(i))$ and $\G_j(\V(j),\E(j)$ with different topology, we first define a transition state which takes the union of the two graphs:
$$\G_{trans}=(\V(i)\cup\V(j),\E(i)\cup\E(j))$$ 

\begin{figure}[h]
	\centering
	\includegraphics[width=0.4\linewidth]{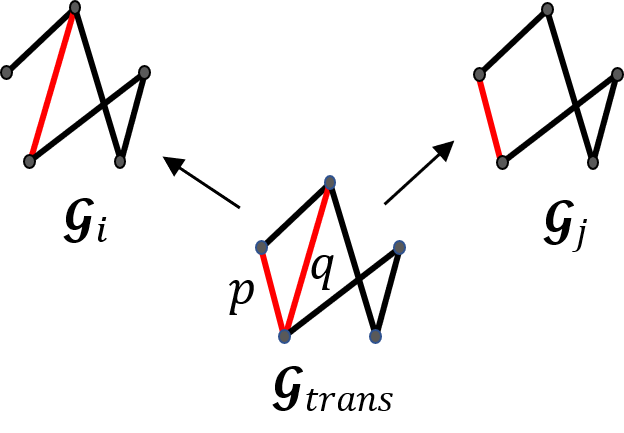}
	\caption[]{Transition state of two graphs: the union of two graphs.}
	\label{fig:graphdistance}
\end{figure}

Then the topology changes from $\G_i$ to $\G_j$ can be considered as different line deletions from their base graph $\G_{trans}$. For each single line deletion, e.g. line $p$, we define its contribution $x_p$ to graph distance by taking the average of its power impacts on all other lines as measured by LODF:
$$x_p = \frac{1}{|\E(i)\cup\E(j)|} \sum_{l\in \E(i)\cup\E(j)\backslash\{p\}}(|d_l^p|)$$
where $|\E(i)\cup\E(j)|$ denotes the cardinality of set $\E(i)\cup\E(j)$, $d_l^p$ denotes the LODF coefficient with $p$ as outage line and $l$ as observed line. 

Then graph distance $D(\G_i, \G_j)$ is given by summing up the contributions of different line deletions from the base graph:
$$D(\G_i, \G_j) = \sum_{p\in (\E(i)-\E(j)) \cup (\E(j)-\E(i))}x_p$$ 
where $\E(i)-\E(j)=\{p|p\in \E(i),p\notin \E(j)\}$ and accordingly, $(\E(i)-\E(j)) \cup (\E(j)-\E(i))$ denotes all the edge changes between the two graphs.

This definition uses LODF as a measure of the impact on power flow of the removal of line $p$. Hence, edges with high LODF to many other edges can potentially cause greater changes in power flow, and thus our graph distance measure places greater importance on these edges. Appendix \ref{apdx: compare distances} demonstrates the effectiveness of the LODF-based graph distance by comparing it against traditional distance measures for anomaly detection.

\subsection{Proposed Temporal Weighting Framework} \label{sec:framework}

In this section, we assume that we are given any distance measurements $D(\G_i, \G_j)$ between any pair of graphs $\G_i$ and $\G_j$, and explain how to use them to assign weights to each previous sensor data. This procedure can take the LODF-based distance defined in the previous subsection as input, but also allows us to flexibly use any given graph distance measure. The proposed Temporal Weighting is given in Algorithm \ref{alg:framework}.

\begin{algorithm}
	\caption{Temporal Weighting Framework at time $t=15$ (see example in Figure \ref{fig:toyexample_nonlocal})} 
	\label{alg:framework}
	\KwIn{Graph distance $D(\G_1, \G_3)$, $D(\G_2, \G_3)$, $D(\G_3, \G_3)$; sensor data $s_i(t)$ with $t=1,2,...15$, $ i=1,2,..., N_{sensor}$.}
	\KwOut{Anomaly score $A(15)$.}
	{\bf Extend graph distance to tick-wise distance.} Each previous time tick is given a distance $d_t$ according to the graph it comes from:
	$$d_t = \begin{cases}
	D(\G_1, \G_3) & \text{for } t = 1,2,..., 5 \\
	D(\G_2, \G_3) & \text{for } t = 6,7,..., 10 \\
	D(\G_3, \G_3) & \text{for } t = 11,..., 14
	\end{cases}$$
	
	{\bf Temporal Weighting:} Use $d_1,...d_{14}$ to assign weights $w_1$, ..., $w_{14}$ to the past sensor data using Algorithm \ref{alg:weighting}.
\end{algorithm}

For the purpose of utilizing previous data from a series of dynamic graphs, {\bf Temporal Weighting} plays an important role. The resulting weights directly determine how much information to extract from each previous record, thus requiring special care. Intuitively, the weights should satisfy the following principles:
\bit
\item The larger the distance $d_t$, the lower the weight $w_t$. This is because high $d_t$ indicates that time tick $t$ is drawn from a very different graph from the current one, and thus should not be given high weight when estimating the expected distribution at the current time
\item Positivity and Normalization: $\sum_{t}w_t = 1, w_t\geq0$
\eit
To satisfy these conditions, we use a principled optimization approach based on {\bf bias-variance trade-off}. Intuitively, the problem with using data with high $d_t$ is {\bf bias}: it is drawn from a distribution that is very different from the current one, and that can be considered a biased sample. We treat $d_t$ as a measure of the amount of bias. Hence, given weights $w_1, \cdots, w_{14}$ on previous data (in Figure \ref{fig:toyexample_nonlocal} example), the total bias we incur can be defined as $\sum_{t\in\{1,...,14\}}w_t d_t$. 

We could make the bias low simply by assigning positive weights to only time points from the most recent graph. However, this is still unsatisfactory as it results in a huge amount of {\bf variance}: since very little data is used to learn $\mathcal{N}_{t=15}$, the resulting estimate has high variance. Multiplying a fixed random variable by a weight $w_t$ scales its variance proportionally to $w_t^2$. Hence, given weights $w=[w_1,w_2,\cdots]$, the total amount of variance is proportional to $\frac{1}{2}w^Tw$, which we define as our variance term.

We thus formulate the following optimization problem as minimizing the sum of bias and variance, thereby balancing the goals of low bias (i.e. using data from similar graphs) and low variance (using sufficient data to form our estimates). We formulate the problem as:
\begin{align*}
\min_{w}{\sum_{t}w_t d_t+\frac{1}{2}w^Tw}
\end{align*}
subject to
\begin{align*} 
&\sum_{t}w_t = 1\\
&w_t\geq 0,\forall t
\end{align*}

By writing out its Lagrangian function:
\begin{align*}
L(w,\lambda,u)=d^Tw+\frac{1}{2}w^Tw+\lambda(1-\sum_t w_t)-u^Tw
\end{align*}
 and applying KKT conditions, we can see the optimal primal-dual solution $(w,\lambda^*,u^*)$ must satisfy:
\begin{align*}
d_t+w_t-\lambda^*-u_t^*=0
\end{align*}

Since we have $d_t\geq0$, by further manipulation we have:
\begin{align*}
w_t = \max\{\lambda^*-d_t,0\}
\end{align*}
Moreover, there is a unique choice of $\lambda^*$ such that the resulting weights $w_t$ sum up to $1$. 
This $w_t$ against $d_t$ relationship is shown in Figure \ref{fig:w_and_d}. This result is intuitive: as $d_t$ increases, the resulting weight we assign $w_t$ decreases, and if $d_t$ passes a certain threshold, it becomes large enough so that any reduction in variance it could provide is more than offset by its large bias, in which case we assign it a weight of $0$. 
 \begin{figure}[h]
 	\centering
 	\includegraphics[width=0.5\linewidth]{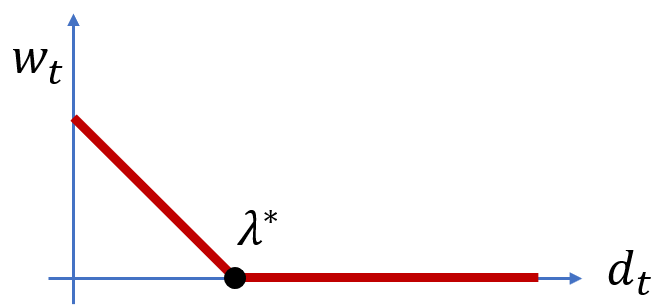}
 	\caption[]{$w_t-d_t$ relationship.}
 	\label{fig:w_and_d}
 \end{figure}

Our Temporal Weighting algorithm is in Algorithm \ref{alg:weighting}. During implementation, we adjust the relative importance of bias and variance by normalizing and scaling the graph distances (scaling factor 0.005 works well based on our empirical observation).
\begin{algorithm}
	\caption{Computing Temporal Weights $w_t$}
	\label{alg:weighting}
	\KwIn{distance $d_t$, with $t = 1,2,\cdots, N$}
	\KwOut{weights $w_t$, with $t = 1,2,\cdots, N$}
	Compute the unique $\lambda^*$ that satisfies:
	$$\sum_{t\in\{1,2,...N\}} \max\{\lambda^*-d_t,0\}=1$$
	
	Get weights $w_t$: 
	$$w_t = \max\{\lambda^*-d_t,0\}$$
\end{algorithm}

\subsection{Proposed Anomaly Detection Algorithm} \label{sec:anomaly}
Having obtained our weights $w_t$, the remaining step is to compute our anomaly score, as shown in Algorithm \ref{alg:anomaly}. 

We focus on 3 metrics from sensor data as indications of power system anomalies. These metrics were studied in \cite{hooi2018gridwatch} and found to be effective for detecting anomalies in power grid sensor data. In our setting, recall that for each sensor, we can obtain $\Delta s_i$ that contains changes of real and reactive power on the adjacent lines, over time. The 3 metrics are:
\bit
\item \emph{Edge anomaly metric}: $X_{edge,i}(t) = \max_{l\in{E_{adj}}}\Delta s_{i,l}$ which measures the maximum line flow change among lines connected to the sensor. Let $E_{adj}$ denote the set of lines connected to sensor $i$:
\item \emph{Average anomaly metric}: $X_{ave,i}(t) = \text{mean}\{\Delta s_{i,l}\ |\ l\in{E_{adj}}\},$ which measures the average line flow change on the lines connected to the sensor:
\item \emph{Diversion anomaly metric}: $X_{div,i}(t) = \text{std}\{\Delta s_{i,l}\ |\ l\in{E_{adj}}\},$ which measures the standard deviation of line flow change over all lines connected to the sensor:
\eit

Intuitively, for each metric, we want to estimate a model of its normal behavior. To do this, we compute the weighted median and interquartile range (IQR)\footnote{IQR is the difference between 1st and 3rd quartiles of the distribution, and is commonly used as a robust measure of spread.} of the detection metric, weighting the time points using our temporal weights $w_1, \cdots, w_t$. (Weighted) median and IQR are preferred choice of distribution parameters over the mean and variance for anomaly detection since they are robust measures of central tendency and statistical dispersion (i.e. they are less likely to be impacted by outliers)\cite{robust_statistics}. We can then estimate the anomalousness of the current time tick by computing the current value of a metric, then subtracting its weighted median and dividing by its IQR. The exact steps are given in Algorithm \ref{alg:anomaly}.

\begin{algorithm}
	\caption{Anomaly Detection (see Figure \ref{fig:toyexample_nonlocal})}
	\label{alg:anomaly}
	\KwIn{Temporal weights $w_t$; sensor data $s_i(t)$ with $t=1,2,...15$, $ i=1,2,...N_{sensor}$}
	\KwOut{anomaly score $A(15)$}
	\For{{$i \gets 1$ to $N_{sensor}$} }{
	
	
	
	{\bf Compute weighted median and IQR:}
	\begin{align*} 
	\mu_{edge}=\text{Weighted Median}\{X_{edge, i}(t)\ |\ t=1,...,14\}\\
	IQR_{edge}=\text{Weighted IQR}\{X_{edge, i}(t)\ |\ t=1,...,14\}
	\end{align*}
	\ \ weighted by $w_1,...w_{14}$ (similarly for $X_{ave}, X_{div}$). 
	
	{\bf Calculate sensor-wise anomaly score at t=15:} 
	\begin{align*}
	a_i(15) = \max_{metric\in\{edge, ave, div\}}\frac{X_{metric, i}(15)-\mu_{metric}}{IQR_{metric}}
	\end{align*}
	}

{\bf Calculate anomaly score for target time tick}, as the max score over sensors:
\begin{align*}
A(15) = \max_{i\in\{1,...,N_{sensor}\}}{ a_i(15)}
\end{align*}
\end{algorithm} 


\subsection{Proposed Locally Sensitive Distance Measure} \label{sec:local}

\subsubsection*{Motivation}
\begin{figure}[h]
	\centering
	\includegraphics[width=0.55\linewidth]{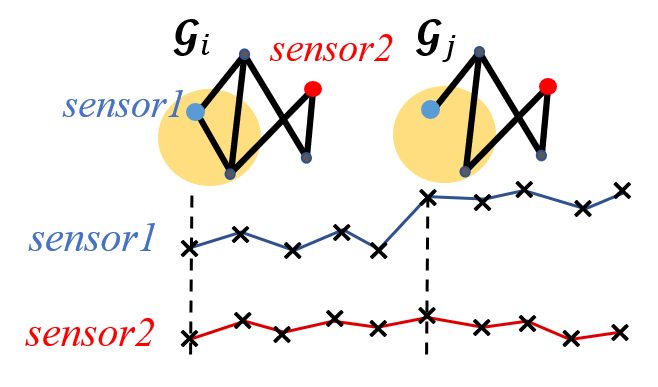}
	\caption[]{Simple motivating example: \methodls. The two graphs are very different within the yellow localized region. However, sensor 2 is far away from the yellow region and thus experiences no changes.} 
	\label{fig:toyexample_local}
\end{figure}
In the previous section, we computed a single distance value $D(\G_i, \G_j)$ between any pair of graphs. However, consider two graphs $\G_i$ and $\G_j$ in Figure \ref{fig:toyexample_local} that are very different due to a small yellow localized region (e.g. in a single building that underwent heavy renovation). Hence, $D(\G_i, \G_j)$ is large, indicating not to use data from $\G_i$ when we analyse a time tick under $\G_j$. However, from the perspective of a single sensor $s$ (sensor 2) far away from the localized region, this sensor may experience little or no changes in the power system's behavior, so that data from graph $\G_i$ may have a similar distribution as data from graph $\G_j$, and so for this sensor (sensor 2) we can still use data from $\G_i$ to improve anomaly detection performance. Hence, rather than computing a single distance $D(\G_i, \G_j)$, we compute a separate {\bf locally-sensitive} distance $D_s(\G_i, \G_j)$ specific to each sensor, which measures the amount of change between graphs $\G_i$ and $\G_j$ in the `local' region to sensor $s$. Clearly, the notion of `local regions' must be carefully defined: we will define them based on LODF, recalling that LODF measures how much changes on one edge affect each other edge.

\begin{figure}[h]
	\centering
	\includegraphics[width=0.5\linewidth]{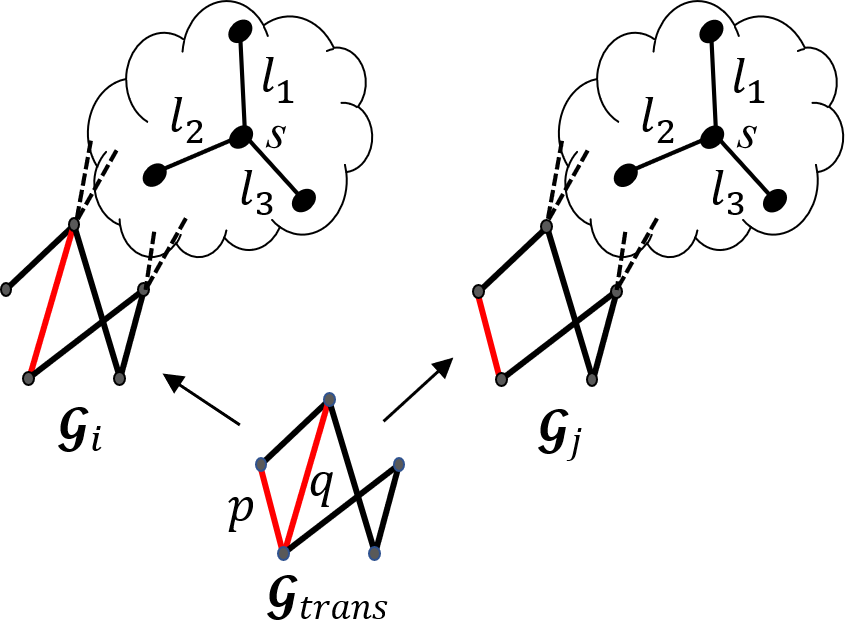}
	\caption[]{Local graph distance: the adjacent lines connected to each sensor s are considered.}
	\label{fig:localgraphdistance}
\end{figure}

Intuitively, the local distance between two graphs with respect to sensor $s$ is large if the changed edges can potentially cause large power change nearby the sensor. Hence, given two graphs $\G_i(\V(i),\E(i))$ and $\G_j(\V(j),\E(j))$ with their transition state $\G_{trans}$ and a sensor $s$ of interest, the local graph distance contribution $y_p$ of line $p$ with respect to sensor $s$ can be calculated by multiplying the whole-grid-wide contribution $x_p$ with a weighing factor $c_p^s$. This $c_p^s$ coefficient filters the power impact for sensor $s$ using the maximum power impact of line deletion on lines around this sensor:
\begin{align*}
c_p^s &= \max_{l\in\Exp_{sensor}(s)}|d_l^p|\\
y_p &= x_pc_p^s
\end{align*}
where $\Exp_{sensor}(s)$ denotes the set of edges around sensor $s$ (e.g. in Figure \ref{fig:localgraphdistance}, $\Exp_{sensor}(s)=\{l_1,l_2,l_3\}$), and $d_l^p$ denotes the LODF with $p$ as outage line and $l$ as observed line.)

Then, as before, the local graph distance with respect to sensor s is defined by summing up the local graph distance contributions of different line deletions from the graph:
$$D_s(\G_i, \G_j) = \sum_{p\in (\E(i)-\E(j)) \cup (\E(j)-\E(i))}y_p$$

\subsection{Statistical error analysis}\label{sec: statistical error analysis}

This section focuses on a quantitative analysis of the performance of our method, through statistical error analysis.

Let $T$ denote the width of time window for analysis, then for $\forall$ sensor $s$, the anomalousness of its observation $x_{T+1}$ is evaluated based on its previous data $x_1, x_2,...,x_T$. The anomaly detection method works by assigning weights $w_1, w_2, ..., w_T$ ($w_t\geq 0, \forall t, \sum_{t=1}^{T}w_t=1$) to all the previous observations and an alarm is created if $x_{T+1}$ deviates $\sum_{t=1}^{T}w_tx_t$ by a certain threshold. 

Here we investigate the properties of statistical error based on the following definitions and assumptions:

\begin{assumption}[Temporal independence]  
For any sensor $s$ and time $t$, its measured data $x_t$ is drawn from a Gaussian distribution $P(x_t)=N(\mu_t,\sigma^2)$ independently from other time ticks, where $\sigma^2$ accounts for all  uncertainties caused by measurement noise, load/generation variation, weather uncertainty, etc.
\label{assumption: temporal independence}
\end{assumption}

\begin{assumption}[identical distribution conditioned on topology]
Given a certain topology $G$ and $\forall$ sensor $s$, all data of $s$ under the same topology $G$ are drawn independently from the same distribution $P(x_t|G)=N(\mu_G,\sigma^2)$ (i.e., for any two time ticks $t_1,t_2$ with the same topology $G$, we have $\mu_{t_1}=\mu_{t_2}=\mu_G$.)
\label{assumption: identical distribution conditioned on G}
\end{assumption}

\begin{definition}[Optimal graph distance]
For any time-series data $x_1,x_2,...,x_T$ of a sensor $s$ and its latest observation $x_{T+1}$, $d_t$ denotes the graph distance between the graph at time $t$ and the graph at $T+1$, i.e., $d_t=D(G_t,G_{T+1}),d_t\geq 0,\forall t$. Then the optimal graph distance $d^*_{t}$ for $\forall t$ satisfies $|\mu_t-\mu_{T+1}|=|\mu_{G_t}-\mu_{G_{T+1}}|\propto d_t^*$, or equivalently, $\exists$ constant $c$ such that $|\mu_t-\mu_{T+1}|=c\cdot d_t^*$.
\label{def: optimal graph distance}
\end{definition}

We first demonstrate that the statistical error can be bounded:

\begin{restatable}[Error bound]{theorem}{errorbound}
Based on Assumption \ref{assumption: temporal independence},\ref{assumption: identical distribution conditioned on G} and Definition \ref{def: optimal graph distance}, the statistical error $\Exp_{x_1,x_2,...,x_T,x_{T+1}}[(\sum_{t=1}^{T}w_tx_t-x_{T+1})^2]$ with $w_t\geq0, \forall t$ and $\sum_{t=1}^{T}w_t=1$, satisfies:
$$\sigma^2\leq \Exp[(\sum_{t=1}^{T}w_tx_t-x_{T+1})^2]\leq (1+\max_t w_t)\sigma^2+c\max_t d_t^* $$
\label{theorem: error bound}
\end{restatable}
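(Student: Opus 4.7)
The approach is the standard bias--variance decomposition of the error. Set $S := \sum_{t=1}^T w_t x_t - x_{T+1}$. By Assumption~\ref{assumption: temporal independence}, $x_1,\ldots,x_{T+1}$ are mutually independent Gaussians each with variance $\sigma^2$, so I can split
\begin{equation*}
\Exp[S^2] \;=\; \mathrm{Var}(S) + (\Exp[S])^2.
\end{equation*}
Using $\sum_t w_t = 1$ together with Assumption~\ref{assumption: identical distribution conditioned on G} (which identifies $\mu_t = \mu_{G_t}$), a direct computation gives the two pieces
\begin{equation*}
\Exp[S] \;=\; \sum_{t=1}^{T} w_t(\mu_t - \mu_{T+1}), \qquad \mathrm{Var}(S) \;=\; \sigma^2\Bigl(1 + \sum_{t=1}^{T} w_t^2\Bigr),
\end{equation*}
where the ``$+\sigma^2$'' in the variance comes from the independent $-x_{T+1}$ term.

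For the lower bound, I would simply discard the nonnegative bias-squared and the nonnegative $\sigma^2 \sum_t w_t^2$ to get $\Exp[S^2] \geq \mathrm{Var}(S) \geq \sigma^2$.

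For the upper bound I would treat the variance and bias terms separately. For variance, the bound $0 \leq w_t \leq \max_t w_t$ combined with $\sum_t w_t = 1$ yields $\sum_t w_t^2 \leq \max_t w_t \cdot \sum_t w_t = \max_t w_t$, so $\mathrm{Var}(S) \leq (1+\max_t w_t)\sigma^2$. For bias, Definition~\ref{def: optimal graph distance} converts $|\mu_t - \mu_{T+1}|$ into $c\, d_t^*$, and then the triangle inequality plus $\sum_t w_t = 1$ gives
\begin{equation*}
|\Exp[S]| \;\leq\; \sum_{t=1}^{T} w_t \cdot c\, d_t^* \;\leq\; c \max_t d_t^*,
\end{equation*}
so $(\Exp[S])^2 \leq (c\max_t d_t^*)^2$. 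Summing the two pieces gives the claimed form of the upper bound.

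The step I expect to be the main obstacle is not any technical maneuver (each piece is a one-line inequality) but reconciling the resulting bias contribution with the statement. My derivation produces $(c \max_t d_t^*)^2$, whereas the theorem writes $c \max_t d_t^*$; the two agree only in the normalized small-distance regime $c \max_t d_t^* \leq 1$ assumed in practice after the scaling described around Algorithm~\ref{alg:weighting}. I would present the decomposition as above, flag this as either a notational shorthand or a missing square, and emphasize the qualitative message that the mean-squared error is at worst the intrinsic noise inflated by the weight-concentration factor $1+\max_t w_t$ plus a bias term controlled by the largest effective graph distance actually used.
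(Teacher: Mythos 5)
Your decomposition is the same as the paper's: the paper splits the error into the variance term $\sum_t w_t^2\sigma^2 \le (\max_t w_t)\sigma^2$, a squared-bias term, and the irreducible error $\Exp[(\mu_{T+1}-x_{T+1})^2]=\sigma^2$, which is exactly your $\mathrm{Var}(S)+(\Exp[S])^2$ with the irreducible $\sigma^2$ folded into $\mathrm{Var}(S)$. The discrepancy you flag in the bias contribution is real and is not resolved by the paper: its proof asserts $\Exp[(\sum_t w_t|\mu_t-\mu_{T+1}|)^2]=\sum_t w_t c d_t^*$, silently dropping the square, whereas the correct computation gives $(\sum_t w_t c d_t^*)^2 \le (c\max_t d_t^*)^2$ (or, via Jensen, $\le c^2\sum_t w_t (d_t^*)^2$). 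So the bound that actually follows from the stated assumptions carries $(c\max_t d_t^*)^2$, as in your derivation; the theorem's form with $c\max_t d_t^*$ holds only under the extra normalization $c\max_t d_t^*\le 1$ that you identify.
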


Upon obtaining the error bound, here are some intuitive explanations of the upper bound being dependent on $\max_t w_t$ and $\max_t d^*_t$:
\begin{itemize}
    \item $\max_t w_t$: large value for this term indicates that the estimation method depends heavily on a particular prior data point with weight $w_t$. This can lead to overfitting and as a result higher error (upper bound) due to high variance.
    \item $\max_t d^*_t$: large value for this term indicates that a prior data point from a very different distribution has been used for estimation, which can lead to higher error (upper bound) due to high bias.
\end{itemize}

\noindent Another question of interest to us is the properties in the limit of infinite data:
\begin{restatable}[Unbiased estimation under infinite data]{theorem}{errorinfdata}
In the limit of infinite data, the statistical error limits at the lower bound:
$$\Exp_{x_1,x_2,...,x_T,x_{T+1}}[(\sum_{t=1}^{T}w_tx_t-x_{T+1})^2]=\sigma^2$$
and an unbiased estimate of the true distribution $x_{T+1}\sim N(\mu_{T+1},\sigma^2)$ is obtainable using the previous samples, i.e.,
$$\Exp[\sum_{t=1}^{T}w_tx_t]=\mu_{T+1}$$
$$\Exp[\frac{1}{T-1}\sum_t(x_t-\frac{\sum_t x_t}{T})^2]=\sigma^2$$
\end{restatable}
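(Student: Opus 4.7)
The plan is to exploit the infinite-data regime to construct weights that simultaneously zero out the bias term and collapse the variance term in the upper bound of Theorem~\ref{theorem: error bound}. By Assumption~\ref{assumption: identical distribution conditioned on G}, for the current topology $G_{T+1}$ there is a subset $S \subseteq \{1,\ldots,T\}$ of past time indices at which the topology coincides with $G_{T+1}$, and as the data stream grows without bound we may take $|S|\to\infty$. On this sub-ensemble, Definition~\ref{def: optimal graph distance} gives $d_t^* = 0$ and Assumption~\ref{assumption: identical distribution conditioned on G} gives $\mu_t = \mu_{T+1}$, so Assumption~\ref{assumption: temporal independence} implies that $\{x_t : t \in S\}$ are i.i.d.\ $N(\mu_{T+1}, \sigma^2)$.

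First I would assign uniform weights $w_t = 1/|S|$ for $t \in S$ and $w_t = 0$ otherwise, a feasible choice for Theorem~\ref{theorem: error bound}. For this weight vector $\max_t w_t = 1/|S|$ and all positive-weight samples satisfy $d_t^* = 0$, so the theorem yields
\begin{align*}
\sigma^2 \;\leq\; \Exp\!\left[\Bigl(\textstyle\sum_t w_t x_t - x_{T+1}\Bigr)^2\right] \;\leq\; \Bigl(1 + \tfrac{1}{|S|}\Bigr)\sigma^2,
\end{align*}
which is squeezed to $\sigma^2$ as $|S|\to\infty$, giving the first identity. The second identity is then immediate from linearity: $\Exp[\sum_t w_t x_t] = \tfrac{1}{|S|}\sum_{t\in S}\mu_t = \mu_{T+1}$.

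For the third identity, in the infinite-data limit the plain sample mean and Bessel-corrected sample variance are formed over the growing topology-matched subsequence (non-matched samples contribute nothing under the weighting above). The samples $\{x_t : t \in S\}$ are i.i.d.\ Gaussian with common variance $\sigma^2$, and the classical unbiasedness of $\tfrac{1}{n-1}\sum_{i=1}^n (y_i - \bar y)^2$ for i.i.d.\ samples then yields $\Exp\!\left[\tfrac{1}{|S|-1}\sum_{t\in S}(x_t - \bar x_S)^2\right] = \sigma^2$.

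I expect the main obstacle to be interpretive rather than technical: the variance estimator in the statement uses the unweighted mean $\tfrac{1}{T}\sum_t x_t$ and normalization $\tfrac{1}{T-1}$, which are unbiased for $\sigma^2$ only when the $x_t$'s share a distribution. The natural way to reconcile this with the dynamic-graph setting is to interpret the infinite-data limit as restricting attention to the topology-matched sub-ensemble $S$, exactly as in the first step. Once that restriction is made, every claim follows either from a direct substitution into Theorem~\ref{theorem: error bound} or from textbook properties of the sample mean and Bessel-corrected sample variance for i.i.d.\ Gaussians.
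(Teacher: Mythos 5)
Your proposal is correct and follows essentially the same route as the paper's proof: restrict to the topology-matched sub-ensemble in the infinite-data limit, take uniform weights so that $\max_t w_t \to 0$ and all weighted samples have $d_t^*=0$, squeeze via Theorem~\ref{theorem: error bound}, and finish with linearity of expectation and Bessel's correction. The only difference is that you make explicit (via the subset $S$) the interpretive step the paper leaves implicit, namely that the unweighted sample mean and variance are to be formed over the matched subsequence.
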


\noindent Detailed proofs of the two theorems are included in Appendix.\ref{sec:proof of statistical analysis}.

\section{Experiments}\label{sec:experiments}
\label{sec:Results}
We design experiments to answer the following questions:
\begin{itemize}
    \item \textbf{Q1. Anomaly Detection Performance:} how accurate is the anomaly detection from our method compared to other ML baselines?
    \item \textbf{Q2. Scalability:} how do our algorithms scale with the graph size?
    \item\textbf{Q3. Practical Benefits:} how can our algorithm enhance the standard practices (SE BDD) in today's grid operator?
\end{itemize}

Our code and data are publicly available at \codeurl. 
Experiments were done on a 1.9 GHz Intel Core i7 laptop, 16 GB RAM running Microsoft Windows 10 Pro. 

\subsubsection*{\bf Case Data} We use 2 test cases: \datasmal is an accurate reconstruction of part of the European high voltage network, and \dataxlarge is a synthetic network that mimics the Texas high-voltage grid in the U.S. The \dataxlarge represents a similarly sized system as the PJM (the largest independent service operator (ISO) in the U.S.) grid, which contains around 25 to 30k buses~\cite{zimmerman2011matpower}. 



{\bf Selection of sensors and network observability:}  Due to the spatial impact of grid anomalies and the efficacy of anomaly metrics, full observability\cite{SE_observability}\cite{grid_observability_analysis} and optimal sensor placement for observability\cite{OPP_observability} are not necessary for Dynwatch to perform. However, access to more sensors as inputs alongside optimal sensor selection\cite{hooi2018gridwatch} can improve the detection performance and help with localization of anomaly. To be conservative, for these experiments, we select random subsets of sensors (of varying sizes) as input. The good performance even with randomly selected sensor measurements validates the effectiveness of our method in selecting relevant time frames from historical data.

{\bf Threshold tuning:} For a fair comparison of different methods, our experiment section, without using any threshold, compares the top K anomalies scored by each algorithm, where K is the number of anomalies simulated. However, in practical use, a detection threshold is required for the algorithm to identify an anomaly. A proper threshold can be either a fixed threshold from an empirical value or domain-knowledge or a learned threshold to facilitate optimal decision making. In particular, optimal threshold tuning needs to take {\bf class imbalance and asymmetric error} into account. Since only a minority of instances are expected to be abnormal, there is an unbalanced nature of data. Moreover, as grid applications are safety-critical, mislabeling an anomaly as normal, i.e., false negatives (FN), could cause fatal consequences, while false positives (FP) which cause loss of ‘fidelity’ are less serious. Proper techniques for tuning a threshold {\bf offline} include: 
\begin{enumerate}
    \item Calculating the evaluation metric (e.g. F-measure which quantifies the balance of precision and recall) for each threshold to select the one that maximizes the metric. 
    \item Plotting the ROC curve or precision-recall curve to select the threshold that gives the optimal balance.
    \item A cost approach that, when the cost of FP, FN, TP, TN are available, minimizes the average overall cost of a diagnostic test, yet domain-specific knowledge is needed for reasonable quantification of the costs.
\end{enumerate}

\subsection{Q1. Anomaly Detection Accuracy}

\begin{figure*}[htp]
	\centering
	\includegraphics[width=0.7\linewidth]{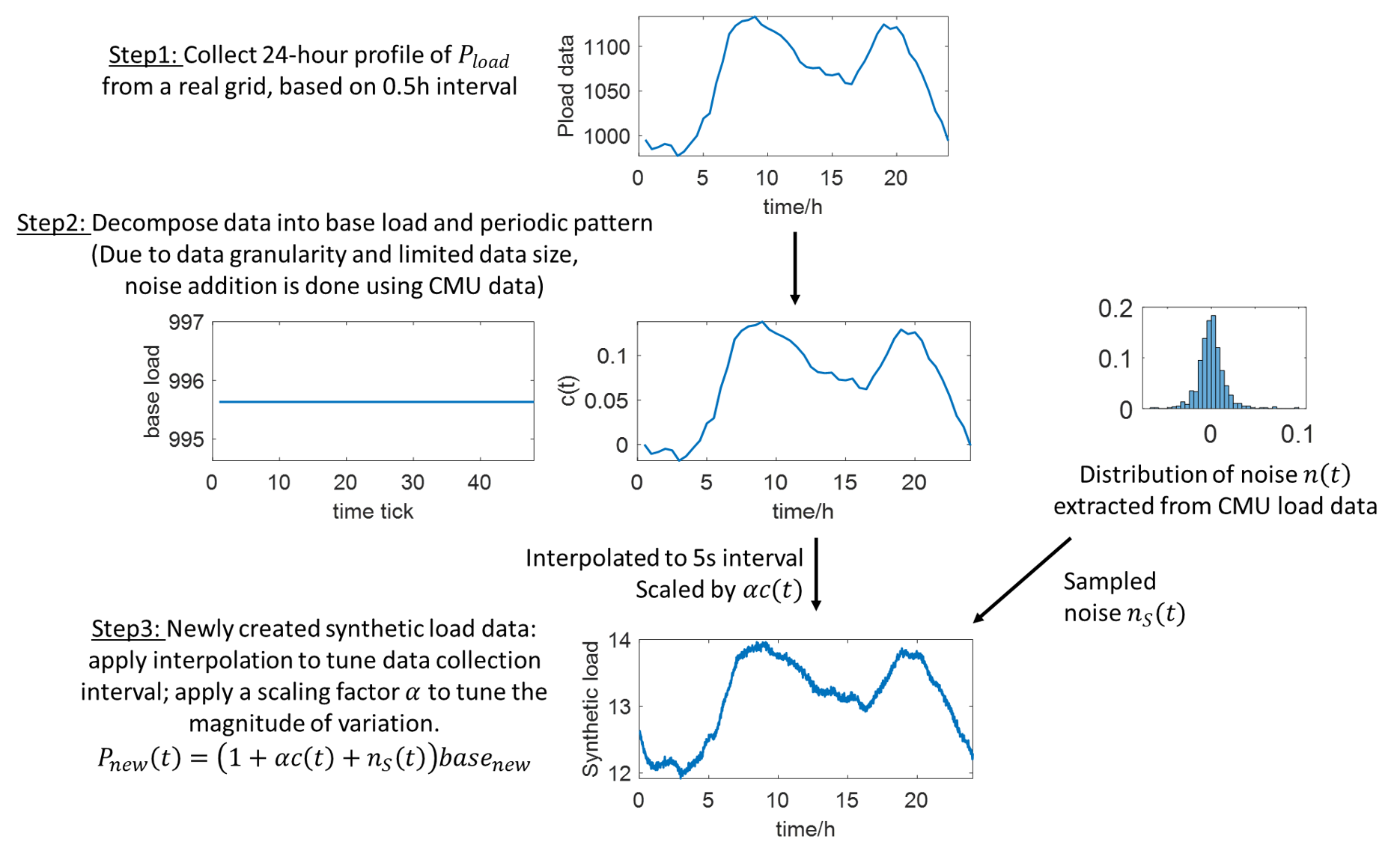}
	\caption[]{Methodology for construction of synthetic data from a real utility-provided load data (See Section \ref{sec: realistic patterns}). The generated data is used for evaluating performance in Section \ref{sec:experiments} where the 1200 time-ticks are used to model the time-series data of grid operation over 1h 40min.}
	\label{fig:synthetic data TVA}
\end{figure*}

In this section, we compare \method against baseline anomaly detection approaches, while varying the number of sensors in the grid. 

\subsubsection*{\bf Experimental Settings} 
Starting with a particular test case as a base graph $G$, we first create $20$ different topology scenarios where each of them deactivates a randomly chosen branch in the base graph. These subsequent $20$ network topologies represent the dynamic grid with topology changes due to operation and control. Then for each topology scenario, we use MatPower~\cite{zimmerman2011matpower}, a standard power grid simulator, to generate $60$ sets of synthetic measurements based on the load characteristics described in the following paragraph. As a result, the multivariate time series with $20\times60=1200$ time ticks mimics the real-world data setting where sensors receive measurements at each time tick $t$, and the grid topology changes every $60$ time ticks. Finally, we sample $50$ random ticks out of $1200$ as times when anomalies occur. Each of these anomalies is added by randomly deleting an edge on the corresponding topology.


Following \cite{hooi2018gridwatch}, to generate an input time series of loads (i.e. real and reactive power at each node), we use the patterns estimated from two real datasets:
\begin{itemize}
    \item Carnegie Mellon University (CMU) campus load data recorded for $20$ days from July 29 to August 17, 2016;
    \item Utility-provided 24 hour dataset of a real U.S. grid. See Section \ref{sec: realistic patterns} for dataset description and statistical findings related to it.
\end{itemize}
to create synthetic time-series load based on a 5s interval, with the magnitude of daily load variation scaled to a predefined level, and with added Gaussian noise sampled from the extracted noise distribution~\cite{song2017powercast}. The detailed data generation process is shown in Figure \ref{fig:synthetic data TVA}.

Given this input, each algorithm then returns a ranking of the anomalies. We evaluate this using standard metrics, AUC\footnote{AUC is the probability of correct ranking of a random “positive”-“negative” pair.} (area under the ROC curve) and F-measure\footnote{F-measure is a trade-off between precision and recall.} ($\frac{2\cdot \text{precision} \cdot \text{recall}}{\text{precision} + \text{recall}}$), the latter computed on the top $50$ anomalies output by each algorithm. 

\subsubsection*{\bf Baselines} Dynamic graph anomaly detection approaches \cite{akoglu2010oddball,chen2012community,araujo2014com2,shah2015timecrunch} cannot be used as they consider graph structure only, but not sensor data. \cite{mongiovi2013netspot} allows sensor data but requires graphs with fully observed edge weights, which is inapplicable as detecting failed power lines with all sensors present reduces to checking if any edge has current equal to $0$. Hence, instead, we compare \method to GridWatch~\cite{hooi2018gridwatch}, an anomaly detection approach for sensors on a static graph, and the following multidimensional time-series based anomaly detection methods: Isolation Forests~\cite{liu2008isolation}, Vector Autoregression (VAR)~\cite{hamilton1994time}, Local Outlier Factor (LOF)~\cite{breunig2000lof}, and Parzen Window~\cite{parzen1962estimation}. Each uses the currents and voltages at the given sensors as features. For VAR, the norms of the residuals are used as anomaly scores; the remaining methods return anomaly scores directly.
For Isolation Forests, we use $100$ trees (following the defaults in scikit-learn\cite{scikit-learn}). For VAR, following standard practice, we select the order by maximizing AIC. For LOF we use $20$ neighbors (following the default in scikit-learn), and we use $20$ neighbors for Parzen Window.

As shown in Figure \ref{fig:result_all}, \method clearly outperforms the baselines on both metrics, having an F-measure of $>20\%$ higher than the best baseline. 
\begin{figure}[h]
	\centering
	\includegraphics[width=0.75\linewidth]{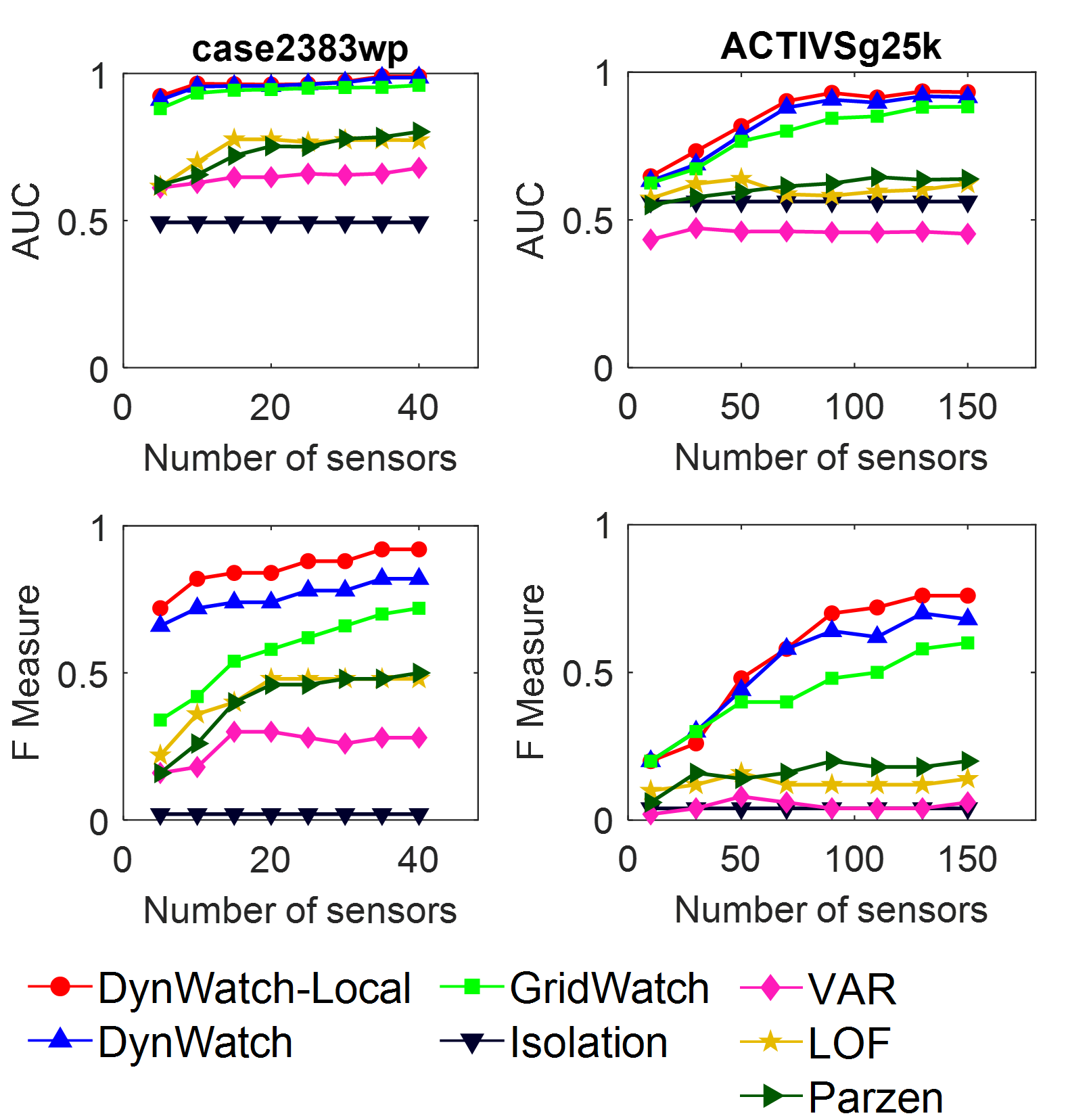}
	\caption[]{Experiment results by AUC and F-measure.}
	\label{fig:result_all}
\end{figure}
\subsection{Q2. Scalability}
In this subsection, we seek to analyze the scalability of our \method and \methodls. In reality, PJM, the largest ISO in the U.S., runs ACSE on a 28k bus model, performed every 1 min\cite{PJM-manual-12}, thus any anomaly detection algorithm that takes significantly less than 1 min may provide valuable information to prevent wrong control decisions in real-time. The following results demonstrate the proposed method's capability to achieve this.


Here, 
we generate test cases of different sizes by starting with the \datasmal case and duplicating it $3,4,5,\cdots,12$ times. After each duplication, edges are added to connect each node with its counterpart in the last duplication, so that the whole grid is connected. Then for each testcase, we generate 20 dynamic grids and sensor data with 1 randomly chosen sensor and 1200 time ticks, following the same settings as the previous sub-section. Finally, we measure 
\begin{figure}[h]
	\centering
	\includegraphics[width=1\linewidth]{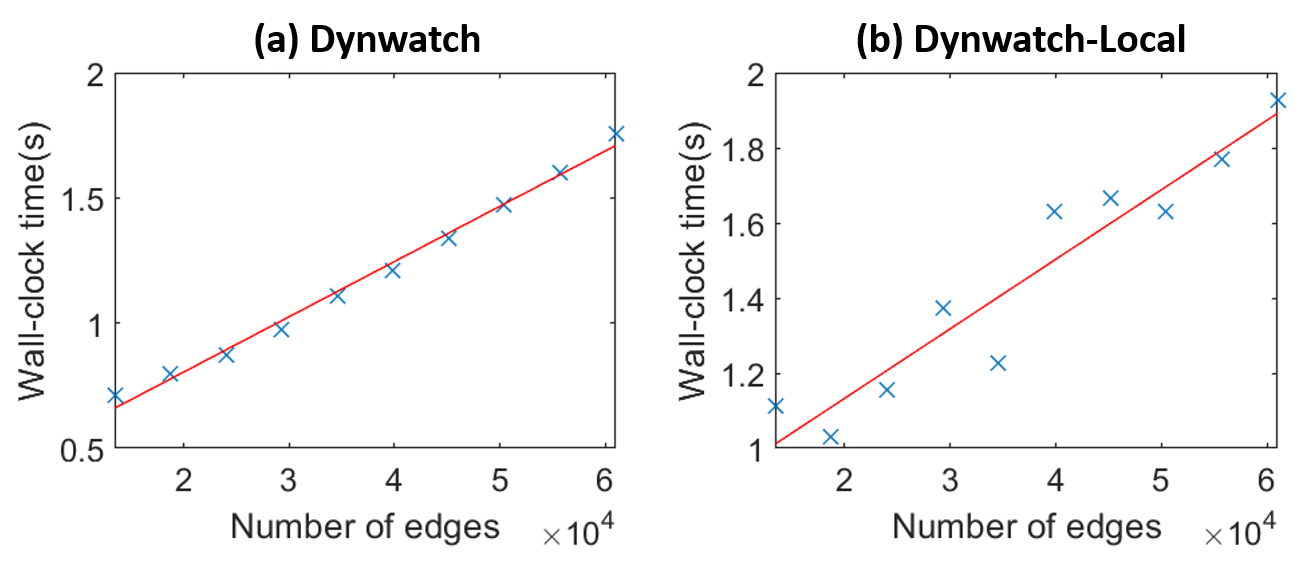}
	\caption[]{\label{fig:scability_all_new} Our algorithms scale linearly: wall-clock time of (a) \method; and (b) \methodls  against number of edges, when detecting on all 1200 time ticks. The red lines are best-fit regression lines. }
\end{figure}

Figure \ref{fig:scability_all_new} shows that our method is fast: even on a large case with 60k+ branches, both methods took less than 2s to apply anomaly detection on all 1200 time ticks of the sensor, corresponding to an average of less than 1.7ms per time tick per sensor. The \dataxlarge (similar to the largest real-time network in the US) has 32k+ branches, and thus run-time of anomaly detection at each time $t$ will be significantly less than 1 min. The figure also shows that our methods scale close to linearly with the grid size. 



\subsection{Q3. Practical Benefits}\label{sec:experiment on FDIA}

In this section, we explore how the proposed Dynwatch algorithm can improve the performance of the standard residual-based ACSE bad-data detection (BDD) method, by testing a type of grid-specific anomaly that SE BDD is known to fail against False Data Injection Attack (FDIA).

False Data Injection Attack (FDIA)\cite{fdia-review} is a cyber-attack in which attackers manipulate the value of measurements according to the grid physical model such that the SE outputs incorrect grid estimates while ensuring that its residual does not change by much (ideally remains unchanged). 

In this experiment, we construct an attack on a 14 bus network to mislead the operator into thinking that the load reduces by 20\%. For any anomalous time tick $t$, this is implemented by simulating power flow with the reduced load and generating measurements based on that. 

The time-series measurement data and a comparison of anomaly scores are shown in Figure \ref{fig:FDIA example}. Results show that the ACSE residual, which is metric for BDD reduced in value when anomalies occur (see the residual decrease in Figure \ref{fig:FDIA example} during anomalous operation shown by the dotted red line), implying that the standard SE BDD, along with any other residual-based method, will not be able to detect this coordinated attack.

In addition to standard ACSE BDD, we also implemented the auto-regression (VAR) method to detect grid anomalies. As the VAR algorithm does not consider dynamic graph properties of the power grid, it tends to create alarms on all abrupt measurement changes. This will easily lead to false positives since regular topological changes also result in sudden temporal change. This can be seen in the Figure \ref{fig:FDIA example} wherein during regular topology changes (shown in black dotted line) sudden spikes in VAR anomaly score are observed.

In comparison, our proposed method is able to detect all anomalies without False positives (FP). This indicates that the proposed algorithm is more likely to detect anomalies due to complex attack scenarios while being able to reduce the occurrences of false positives. 

\begin{figure*}[htp]
\centering
\includegraphics[width=0.75\linewidth]{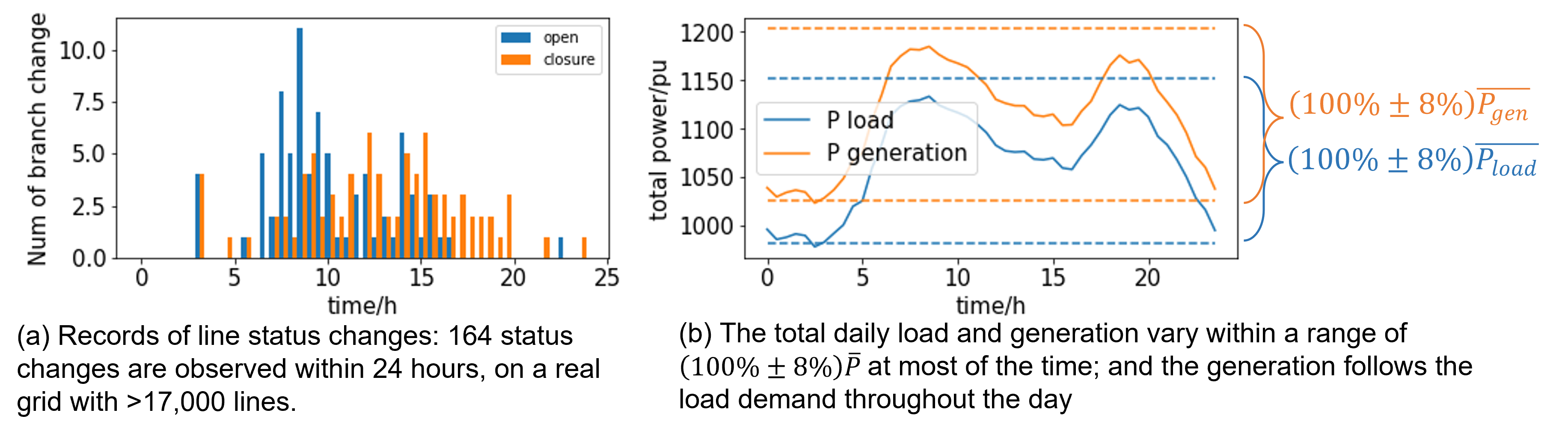}
\caption[]{Change of line status, total load and generation on for a real-world load dataset.}
\label{fig: topology and total power change}    
\end{figure*}

\begin{figure*}[htp]
\centering
\includegraphics[width=0.8\linewidth]{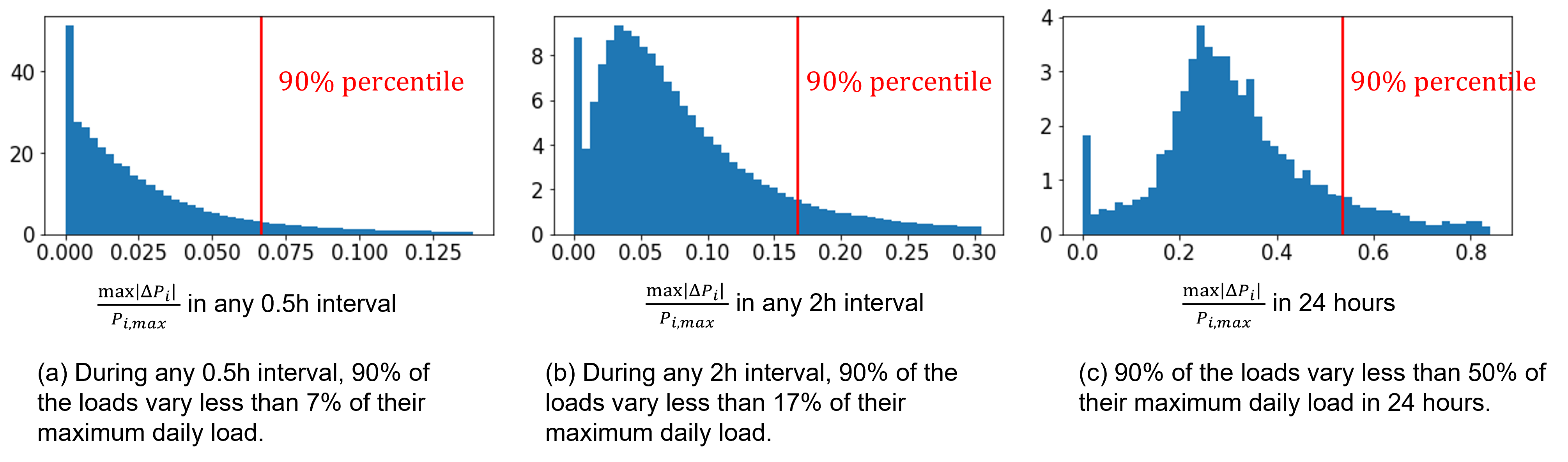}
\caption[]{Distribution of individual load variations for real-world load dataset.}
\label{fig: distribution of load changes}
\end{figure*}

\section{Further Considerations}\label{sec: discussion}

Some special conditions of interest are related to the application of our proposed method. They are discussed below.

\subsection{Realistic grid patterns}\label{sec: realistic patterns}

Here we explore the realistic grid pattern, using a \textbf{utility-provided real-world dataset} (for reasons of confidentiality, we cannot make this dataset public). 

\textbf{Dataset description-} 24 hours worth of ACSE output data from a real utility in the Eastern Interconnect of the U.S. The dataset contains all operational data of the grid based on a 0.5-hour interval. 

Here we document the statistical findings with the following notable observations:

\begin{itemize}
    \item Figure \ref{fig: topology and total power change}(a): Topology changes frequently in reality, thus our proposed method, which considers the impact of topology change, provides realistic values in handling time-series sensor data. 
    \item Figure \ref{fig: topology and total power change}(b): Real part of daily loads and generations change smoothly within a range of $(100\pm 8)\%$ times their average values, with the generation following the load demand throughout the day. 
    \item Figure \ref{fig: distribution of load changes}: $90\%$ of the time, an individual real part of load changes less than $10\%$ of its maximum daily value within a 30-min interval; less than $17\%$ within a 2-hour interval; and less than $50\%$ within 24 hours.
\end{itemize}

\noindent We observed similar behavior for reactive power in the system as well.

\subsection{Non-synchronization of measurements}
In the worst possible case of measurement non-synchronization, the measurements and status data can be collected at such a different time that the measurements and reference topology are inconsistent with each other. If this happens, our method should detect it as an anomaly, since they are equivalent to topological or measurement errors.

However, the impact of this event is low. In case “uncorrected” unsynchronized measurements or topology exists, then that is an error that would disrupt the grid operator’s state estimation. In practice, the grid has some mechanisms to prevent this from happening: 

\begin{enumerate}
    \item Both status data and analog measurements are sampled very frequently: status data are updated upon change of status in PJM and every 4s in ISO-NE; analog measurements are updated every 10s in PJM. Considering that the frequency of topology change is low, the high acquisition frequency is likely to guarantee all information is up to date.
    \item Topology estimation (TE) can check the consistency between switch status and analog measurements: e.g., if the switch is observed as ‘open’ but the current measurement on it is non-zero, then this status data is wrong and should be corrected. This processing is within the TE algorithms in the control room.
\end{enumerate}
 
Therefore, it is likely that we can still obtain a good reference topology not affected by the non-synchronization.

\subsection{Abrupt changes in aggregated load or generation}
Considering that: 1) the load changes slowly following a daily cycle; 2) conventional generators are re-dispatched every 15 min (less frequent than data acquisition we have sufficient time ticks before each re-dispatch) and each re-dispatch is limited by ramp rate; 3) renewable forecasts are improving and the auxiliary services like batteries are making them more stable and “firm” sources of energy, it’s still reasonable to assume that for any time $t$, its recent previous measurements can provide meaningful information.

\textit{What if sudden load change happens? } At transmission grid level the load is aggregated at high-voltage nodes because of which abrupt change in load is uncommon. However, there exist rare events like load shedding or an unusual re-dispatch where there might be a sudden redistribution of power in the grid in a very short interval. In case of these rare events, the proposed method may detect the abrupt change as an anomaly. In reality, these events may not be anomalous and will be misclassified as anomalies. During such instances, we will rely on the operator to clear these false positives in their decision-making process where additional trustworthy information is available. This is reasonable since these events are rare and the operators are well aware of the situation of active ongoing load-shedding in a region. In case that the abrupt load change is unexpected to the operator, an indicator for anomalous behavior might be a rather useful one.

\section{Conclusion}
\label{sec:Conclusion}
In this paper, we proposed \method, an online algorithm that accurately detects anomalies using sensor data on a changing graph (grid). \method applies a similarity-based approach to measure how much the graph changes over time, with which we assign greater weight to previous graphs which are similar to the current graph. We use a domain-aware graph similarity measure based on Line Outage Distribution Factors (LODF), which exploit physics-based modeling of how changes in one line affect other lines in the graph. 

By plugging in different graph similarity measures, our approach could be applied to other domains. Hence, future work could study how sensitive various detectors are for detecting anomalies in graph-based sensor data from different domains.

\ben
\item {\bf Problem Formulation and Algorithm:} we propose a novel and practical problem formulation, of anomaly detection using sensors on a changing graph. For this problem, we propose a graph-similarity based approach, and a domain-aware similarity measure based on Line Outage Distribution Factors (LODF). 
\item {\bf Effectiveness:} Our \method algorithm outperforms existing approaches in accuracy by 20$\%$ or more (F-measure) in experiments. 
\item {\bf Scalability:} \method is fast, taking 1.7ms on average per time tick per sensor on a 60k edge graph, and scaling linearly in the size of the graph. 
\een
\textbf{Reproducibility:} our code and data are publicly available at \codeurl.


\section*{Acknowledgment}
Work in this paper is supported in part by C3.ai Inc. and Microsoft Corporation.
\bibliographystyle{IEEEtran}
\bibliography{refbib}

\appendix
\section{Appendix}

\subsection{Comparison of graph distance measures}\label{apdx: compare distances}
To quantitatively justify the effectiveness of our proposed graph distance, we compared the proposed distance with other traditional measures applicable to power grids:
\begin{itemize}
    \item Simple GED\cite{GED_MCS_shoubridge2002}: the distance between two graphs is equal to the number of changed edges.
    \item Variant of GED with line admittance used as weights assigned to the changed edges. Admittance is used here because the larger the admittance, the more likely the edge has large power flows on it, meaning it is important to the grid.
    \item Equivalent conductance-based measure: the distance between two graphs is equal to the sum of the equivalent conductance of all changed edges. Equivalent conductance is able to take more consideration of the system-wise impact of each edge.
\end{itemize}

Result in Figure \ref{fig:compare different distances} shows our proposed measure outperforms the baselines above. Here the time series data is generated using the pattern from the utility-provided data set, following the process described in Figure \ref{fig:synthetic data TVA}.
\begin{figure}[h]
	\centering
	\includegraphics[width=0.95\linewidth]{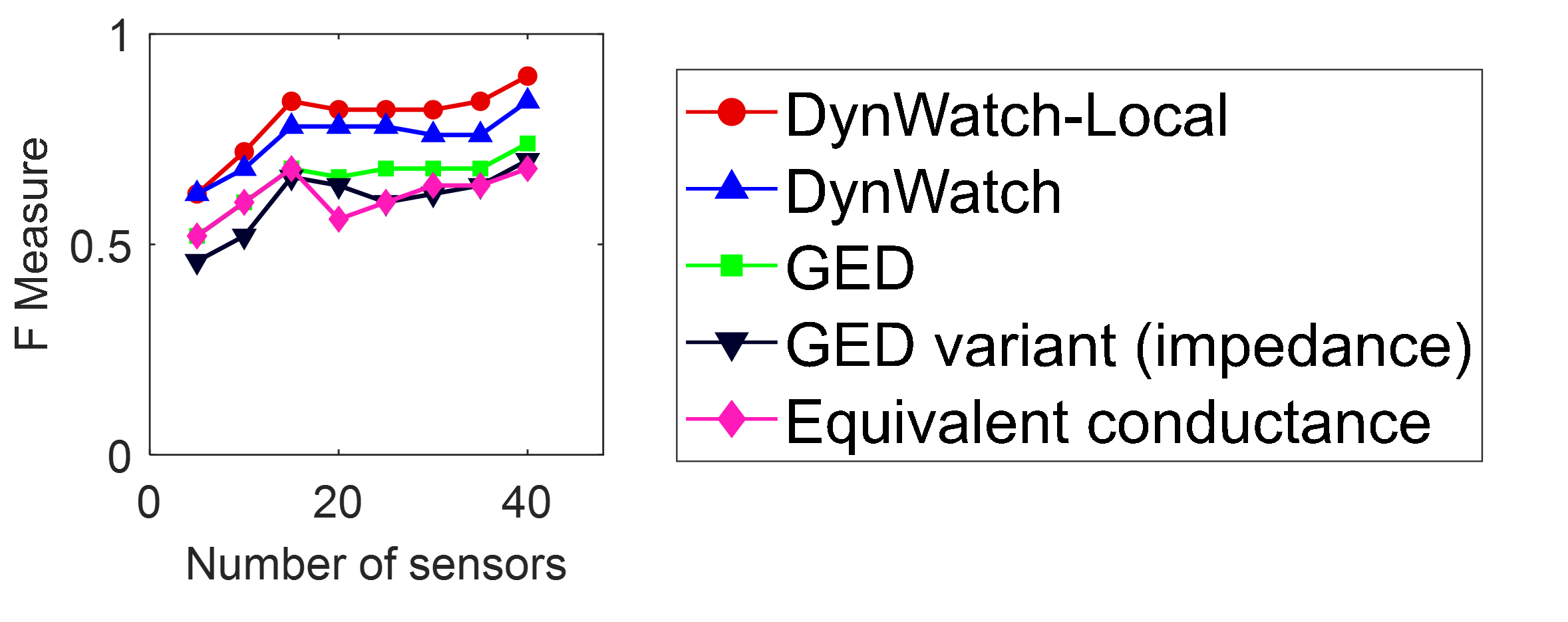}
	\caption[]{Result of F-measure on case2383wp, with 40 sensors installed: the proposed LODF-based graph distance outperforms other distance measures.}
	\label{fig:compare different distances}
\end{figure}

\subsection{Proofs of statistical error analysis}\label{sec:proof of statistical analysis}

Here we will prove the theorems demonstrated in Section \ref{sec: statistical error analysis}.

\errorbound*
\begin{proof}
\begin{align}
    &\Exp_{x_1,x_2,...x_T}[(\sum_{t=1}^{T}w_tx_t-x_{T+1})^2] \\
    =& \Exp[(\sum_{t=1}^{T}w_tx_t-\mu_{T+1}+\mu_{T+1}-x_{T+1})^2]\\
    =&\Exp[(\sum_{t=1}^{T}w_tx_t-\mu_{T+1})^2] + \Exp[(\mu_{T+1}-x_{T+1})^2] \\
    &+ 2\Exp[(\sum_{t=1}^{T}w_tx_t-\mu_{T+1})(\mu_{T+1}-x_{T+1})]
\end{align}
Based on Assumption \ref{assumption: temporal independence}, we have $\Exp[\mu_{T+1}-x_{T+1}]=0$, and thus
\begin{align*}
    &\Exp[(\sum_{t=1}^{T}w_tx_t-\mu_{T+1})(\mu_{T+1}-x_{T+1})\\
    =&\Exp[\sum_{t=1}^{T}w_tx_t(\mu_{T+1}-x_{T+1})]-\mu_{T+1}^2+\mu_{T+1}\Exp[x_{T+1}]\\
    =&\Exp[\sum_{t=1}^{T}w_tx_t]\Exp[\mu_{T+1}-x_{T+1}]-\mu_{T+1}^2+\mu_{T+1}^2\\
    =&0
\end{align*}
Therefore we have
\begin{align}
    &\Exp_{x_1,x_2,...x_T,x_{T+1}}[(\sum_{t=1}^{T}w_tx_t-x_{T+1})^2] \\
    =&\Exp[(\sum_{t=1}^{T}w_tx_t-\mu_{T+1})^2] + \Exp[(\mu_{T+1}-x_{T+1})^2] \\
    =&\Exp[(\sum_{t=1}^{T}w_tx_t-\sum_{t=1}^{T}w_t\mu_t+\sum_{t=1}^{T}w_t\mu_t-\mu_{T+1})^2] \\
    &+ \Exp[(\mu_{T+1}-x_{T+1})^2] \\
    =&\Exp[(\sum_{t=1}^{T}w_tx_t-\sum_{t=1}^{T}w_t\mu_t)^2] +\Exp[(\sum_{t=1}^{T}w_t\mu_t-\mu_{T+1})^2]\\
    &+2\Exp[(\sum_{t=1}^{T}w_tx_t-\sum_{t=1}^{T}w_t\mu_t)(\sum_{t=1}^{T}w_t\mu_t-\mu_{T+1})]\\
    & +\Exp[(\mu_{T+1}-x_{T+1})^2] 
\end{align}
Similarly based on Assumption \ref{assumption: temporal independence}, it is easy to show that
\begin{align*}
    \Exp[(\sum_{t=1}^{T}w_tx_t-\sum_{t=1}^{T}w_t\mu_t)(\sum_{t=1}^{T}w_t\mu_t-\mu_{T+1})]=0
\end{align*}
Thus we have
\begin{align}
    &\Exp_{x_1,x_2,...x_T,x_{T+1}}[(\sum_{t=1}^{T}w_tx_t-x_{T+1})^2]\\
    =&\underbrace{\Exp[(\sum_{t=1}^{T}w_tx_t-\sum_{t=1}^{T}w_t\mu_t)^2]}_\text{Variance}+
    \underbrace{E[(\sum_{t=1}^{T}w_t\mu_t-\mu_{T+1})^2]}_\text{Bias$^2$}\\
    &+\underbrace{\Exp[(\mu_{T+1}-x_{T+1})^2] }_\text{irreducible error}
\end{align}
Based on Assumption \ref{assumption: temporal independence} and $w_t\geq0$ for $\forall t, \sum_{t=1}^{T}w_t=1$, the variance term can be upper bounded by:
\begin{align*}
    \Exp[(\sum_{t=1}^{T}w_tx_t-\sum_{t=1}^{T}w_t\mu_t)^2]&=\sum_{t=1}^{T}w_t^2\Exp[(x_t-\mu_t)^2]\\
    &\leq (\max_tw_t)\sigma^2
\end{align*}
Further making use of Assumption \ref{assumption: identical distribution conditioned on G}, it is easy to get an upper bound for the bias$^2$ term:
\begin{align*}
    E[(\sum_{t=1}^{T}w_t\mu_t-\mu_{T+1})^2]&=\Exp[(\sum_{t=1}^{T}w_t|\mu_t-\mu_{T+1}|)^2]\\
    &=\sum_{t=1}^{T}w_tcd^*_t\\
    &\leq c\max_td^*_t
\end{align*}
Finally, as $\Exp[(\mu_{T+1}-x_{T+1})^2]=\sigma^2$ based on the assumption that $x_{T+1}\sim N(\mu_{T+1},\sigma^2)$, we are able to \textbf{upper bound} the statistical error as:
\begin{align}
    &\Exp_{x_1,x_2,...x_T,x_{T+1}}[(\sum_{t=1}^{T}w_tx_t-x_{T+1})^2]\\
    &\leq (\max_t w_t)\sigma^2+\max_t cd_t^*+\sigma^2\\
    &= (1+\max_t w_t)\sigma^2+c\max_td_t^* 
\end{align}
Meanwhile the \textbf{lower bound} is also obvious:
\begin{align}
     \Exp_{x_1,x_2,...x_T,x_{T+1}}[(\sum_{t=1}^{T}w_tx_t-x_{T+1})^2]\geq \sigma^2
\end{align}
\end{proof}

\errorinfdata*
\begin{proof}
In the limit of infinite data, there exist infinite data with the same topology as $x_{T+1}$, thus it is possible to find the time series data such that $x_1,x_2,...,x_T$ are drawn independently from the same distribution, s.t, $x_t\sim N(\mu_{T+1},\sigma^2), \forall t$ and $T\longrightarrow \infty$ . 

Thus we have $w_t=\frac{1}{T}\longrightarrow 0, d^*_t=0$ for $\forall t\in{0,1,...,T}$ and it is easy to show from Theorem \ref{theorem: error bound} that:
$$\sigma^2\leq \Exp_{x_1,x_2,...,x_T,x_{T+1}}[(\sum_{t=1}^{T}w_tx_t-x_{T+1})^2]\leq  \sigma^2$$
Thus,
$$\Exp_{x_1,x_2,...,x_T,x_{T+1}}[(\sum_{t=1}^{T}w_tx_t-x_{T+1})^2]= \sigma^2$$

Also we have,
$$\Exp[\sum_{t=1}^{T}w_tx_t]=\sum_{t=1}^{T}w_t\Exp[x_t]=\mu_{T+1}$$

And based on Bessel's correction, it is easy to get
$$\Exp[\frac{1}{T-1}\sum_t(x_t-\frac{\sum_t x_t}{T})^2]=\sigma^2$$
\end{proof}

\begin{IEEEbiography}
[{\includegraphics[width=1in,height=1.25in,clip,keepaspectratio]{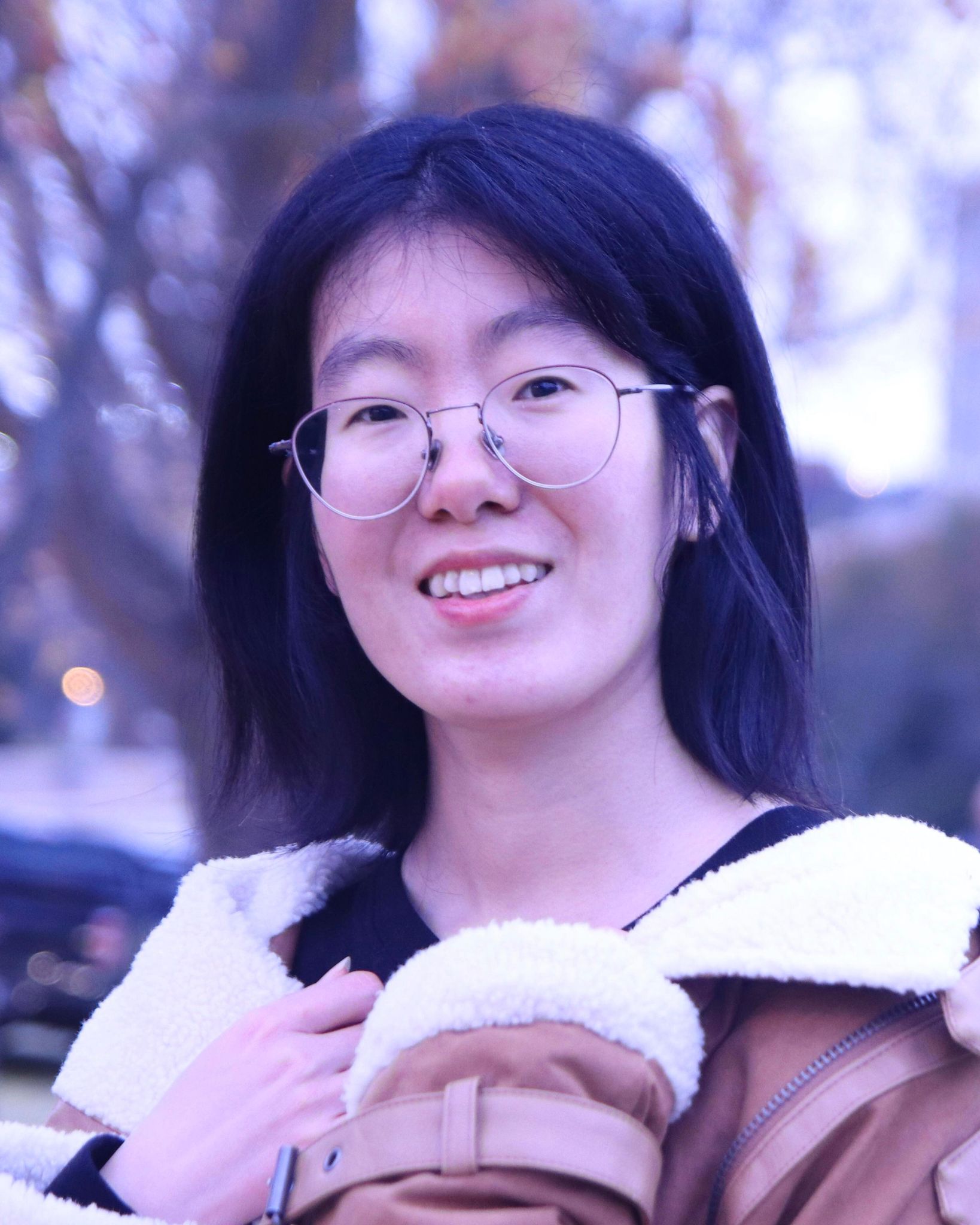}}] 
{Shimiao Li} received the B.E. degree in electrical engineering
from Tianjin University, China, in 2018. She is currently a Ph.D. candidate in the Department of Electrical and Computer Engineering (ECE) at Carnegie
Mellon University, where she is advised by Lawrence Pileggi. Her research interests include analytical methods for grid operation and optimization, as well as physics-informed machine learning for real-world grid cyber-security, reliability and optimization appliations.
\end{IEEEbiography}

\begin{IEEEbiography}
[{\includegraphics[width=1in,height=1.25in,clip,keepaspectratio]{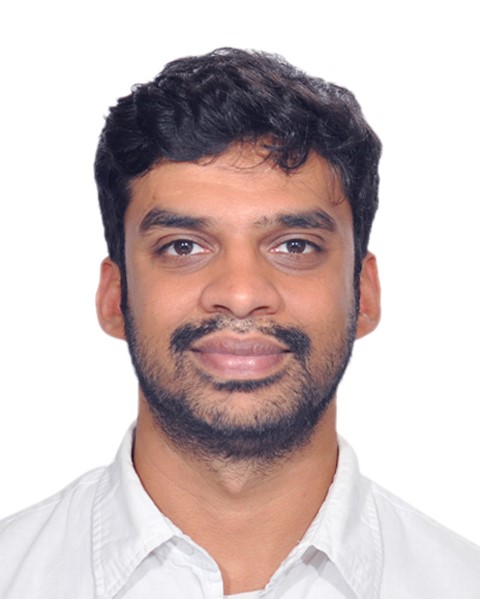}}] 
{Amritanshu Pandey} was born in Jabalpur, India. He received the M.Sc. and Ph.D. degree in Electrical Engineering in 2012 and 2019, respectively, from Carnegie Mellon University, Pittsburgh, PA, USA. He is currently a Special Faculty in the Electrical and Computer Engineering department at Carnegie Mellon University. He has previously worked at Pearl Street Technologies, Inc. as Senior Research Scientist and as an Engineer at MPR Associates, Inc. He develops first principles-based physics driven and ML-based data driven methods and models for energy systems that are expressive (i.e., capture sufficient physical behavior accurately) and efficient in terms of speed and scalability.
\end{IEEEbiography}

\begin{IEEEbiography}
[{\includegraphics[width=1in,height=1.25in,clip,keepaspectratio]{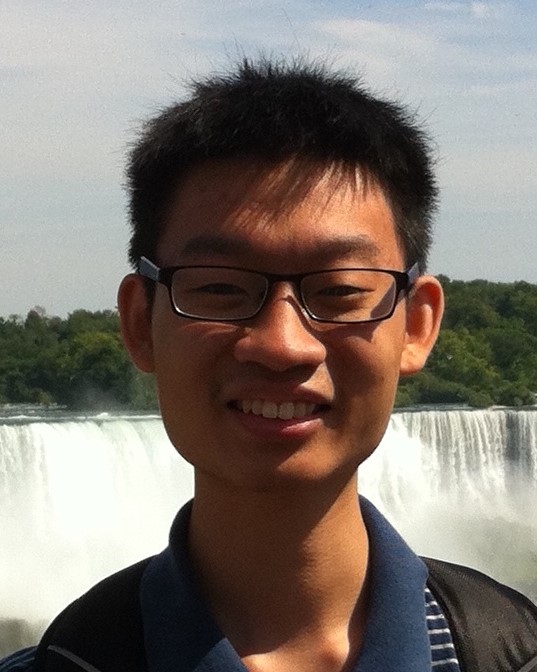}}] 
{Bryan Hooi} received his Ph.D. in Machine Learning from Carnegie Mellon University, where he was advised by Christos Faloutsos. He received the M.S. in Computer Science and the B.S. in Mathematics from Stanford University. He is currently an Assistant Professor in the School of Computing and the Institute of Data Science in National University of Singapore. His research interests include scalable machine learning, deep learning, graph algorithms, anomaly detection, and biomedical applications of AI.
\end{IEEEbiography}

\begin{IEEEbiography}
[{\includegraphics[width=1in,height=1.25in,clip,keepaspectratio]{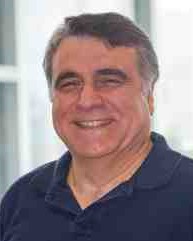}}] 
{Christos Faloutsos} is a Professor at Carnegie Mellon University and an Amazon Scholar. He is the recipient of the Fredkin Professorship in Artificial Intelligence (2020); he has received the Presidential Young Investigator Award by
the National Science Foundation (1989), the Research Contributions Award in ICDM 2006, 
the SIGKDD Innovations Award (2010), the PAKDD Distinguished Contributions Award (2018), 29 ``best paper'' awards (including 8 ``test of time'' awards),
and four teaching awards. Eight of his advisees or co-advisees have attracted KDD or
SCS dissertation awards.  He is an ACM Fellow,
he has served as a member of the executive committee of SIGKDD;
he has published over 400 refereed articles, 17 book chapters
and three monographs.  He holds 8 patents (and several more are pending), and he has given over 50 tutorials and over 25 invited distinguished lectures.
His research interests include large-scale data mining with emphasis on graphs and time sequences; anomaly detection, tensors, and fractals.
\end{IEEEbiography}

\begin{IEEEbiography}
[{\includegraphics[width=1in,height=1.25in,clip,keepaspectratio]{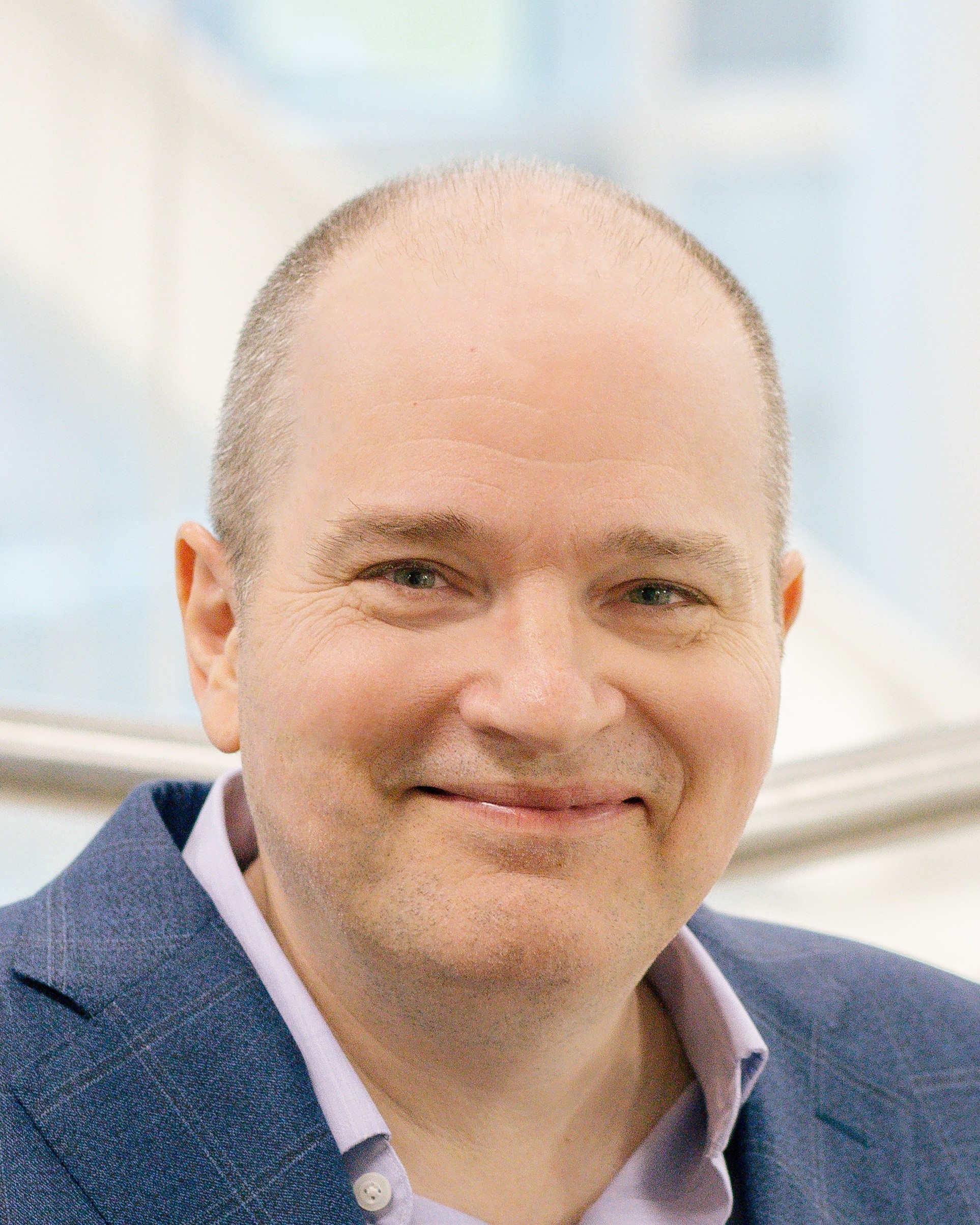}}] 
{Lawrence Pileggi} is the Tanoto professor and Head of electrical and computer engineering at Carnegie Mellon University, and has previously held positions at Westinghouse Research and Development and the University of Texas at Austin. He received his Ph.D. in Electrical and Computer Engineering from Carnegie Mellon University in 1989. His research interests include various aspects of digital and analog integrated circuit design, and simulation, optimization and modeling of electric power systems. He was a co-founder of  Fabbrix Inc., Extreme DA, and Pearl Street Technologies. He has received various awards, including Westinghouse corporation’s highest engineering achievement award, the Semiconductor Research Corporation (SRC) Technical Excellence Awards in 1991 and 1999, the FCRP inaugural Richard A. Newton GSRC Industrial Impact Award, the SRC Aristotle award in 2008, the 2010 IEEE Circuits and Systems Society Mac Van Valkenburg Award, the ACM/IEEE A. Richard Newton Technical Impact Award in Electronic Design Automation in 2011, the Carnegie Institute of Technology B.R. Teare Teaching Award for 2013, and the 2015 Semiconductor Industry Association (SIA) University Researcher Award. He is a co-author of "Electronic Circuit and System Simulation Methods," McGraw-Hill, 1995 and "IC Interconnect Analysis," Kluwer, 2002. He has published over 400 conference and journal papers and holds 40 U.S. patents. He is a fellow of IEEE.
\end{IEEEbiography}

\end{document}